\documentclass[hypertexnames=false]{amsart}

\usepackage{soul}

\usepackage{tikz}
\usepackage{pgfplots}
\usepackage{amsmath,amsthm,amssymb} 
\usepackage{ae,aecompl}
\usepackage{times}
\usepackage{microtype}
\usepackage{dsfont}
\usepackage{amsfonts}

\usepackage{amscd}
\usepackage{txfonts}
\usepackage[margin=1in]{geometry} 
\usepackage[onehalfspacing]{setspace}
\usepackage{array}
\usepackage{enumerate}
\usepackage{kpfonts}
\usepackage[english]{babel}
\usepackage{comment}
\usepackage{tikz-cd}
\setcounter{tocdepth}{2}
\usepackage[numbers]{natbib}

\numberwithin{equation}{section}


\usepackage{microtype}
\usepackage{graphicx}
\usepackage{subfigure}
\usepackage{booktabs} 
\usepackage{comment}
\usepackage{hyperref}


\newcommand{\N}{{\mathbb{N}}}
\newcommand{\R}{{\mathbb{R}}}

\newcommand{\Fc}{{\mathcal{G}}}

\newcommand{\Zc}{{\mathcal{F}}}

\newcommand{\bbR}{\mathbb{R}}

\theoremstyle{plain}
\newtheorem{theorem}{Theorem}[section]
\newtheorem{proposition}[theorem]{Proposition}

\theoremstyle{definition}
\newtheorem{definition}[theorem]{Definition}

\theoremstyle{remark}
\newtheorem{remark}[theorem]{Remark}



\begin{document}
\graphicspath{{Images/}}

\title{A Z\lowercase{e}NN architecture to avoid the Gaussian trap}




\author[L.Carvalho]{Lu\'{\i}s Carvalho}
\address{Lu\'{\i}s Carvalho, Department of Mathematics, ISCTE - Lisbon University Institute\\    Av. das For\c{c}as Armadas\\     1649-026, Lisbon\\   Portugal, and Center for Mathematical Analysis, Geometry,
and Dynamical Systems\\
Instituto Superior T\'ecnico, Universidade de Lisboa\\  Av. Rovisco Pais, 1049-001 Lisboa,  Portugal}
\email{luis.carvalho@iscte-iul.pt}


\author[J.L.Costa]{João L. Costa}

\address{João L. Costa, Department of Mathematics, ISCTE - Lisbon University Institute\\    Av. das For\c{c}as Armadas\\     1649-026, Lisbon\\   Portugal, and Center for Mathematical Analysis, Geometry,
and Dynamical Systems\\
Instituto Superior T\'ecnico, Universidade de Lisboa\\  Av. Rovisco Pais, 1049-001 Lisboa,  Portugal}
\email{jlca@iscte-iul.pt}

\author[J.Mourão]{José Mourão}
\address{José Mourão, Department of Mathematics and Center for Mathematical Analysis, Geometry and Dynamical Systems, Instituto Superior T\'ecnico, Lisbon,\\ Av. Rovisco Pais, 1049-001 Lisboa,  Portugal}
\email{jmourao@tecnico.ulisboa.pt}

\author[G.Oliveira]{Gonçalo Oliveira}
\address{Gonçalo Oliveira, Department of Mathematics and Center for Mathematical Analysis, Geometry and Dynamical Systems, Instituto Superior T\'ecnico, Lisbon,\\ Av. Rovisco Pais, 1049-001 Lisboa,  Portugal}
\email{galato97@gmail.com}

\keywords{Machine learning, Gaussian processes, High frequencies, wide neural networks}

\vskip 0.3in




\begin{abstract}

We propose a new simple architecture, Zeta Neural Networks (ZeNNs), in order to overcome several shortcomings of standard multi-layer perceptrons (MLPs). Namely, in the large width limit, MLPs are non-parametric, they do not have a well-defined pointwise limit, they lose non-Gaussian attributes and become unable to perform feature learning; moreover, finite width MLPs perform poorly in learning high frequencies.  
The new ZeNN architecture is inspired by three simple principles from harmonic analysis: 
i) Enumerate the perceptons and introduce a non-learnable weight to enforce convergence; 
ii) Introduce a scaling (or frequency) factor; 
iii) Choose activation functions that lead to near orthogonal systems. 
We will show that these ideas allow us to fix the referred shortcomings of MLPs. In fact, in the infinite width limit, ZeNNs converge pointwise, they exhibit a rich asymptotic structure beyond Gaussianity, and perform feature learning. Moreover, when appropriate activation functions are chosen, (finite width) ZeNNs excel at learning high-frequency features of functions with low dimensional domains. 
\end{abstract}

\maketitle
	
\tableofcontents

\section{Introduction}

The standard multi-layer perceptrons (MLPs)
are a fundamental building block of classical and modern Neural Networks that, unfortunately, display several undesirable features, namely:

MLPs exhibit a restrictive and somewhat pathological asymptotic behavior, a fact that is in tension with the growing use of (very) large models. More precisely, in the infinite width limit, MLPs do not have a well defined pointwise limit and are forced by the Central Limit Theorem (CLT) into a {\em Gaussian trap}; consequently large width MLPs are in essence non-parametric. Moreover, their training falls into a (fixed) kernel regime; this has consequences at the level of performance and disables feature learning which, in particular, makes it impossible to perform (standard) pre-training.

It is also quite difficult to train MLPs to learn high-frequency features making them unsuited for several machine learning tasks.

The goal of this paper is to present a simple modification of the MLP architecture, Zeta Neural Networks (ZeNNs) that overcomes these shortcomings. In fact, we shall show that, in the infinite width limit, ZeNNs remain bona fide neural networks (with fully trainable parameters) defined via a convergent pointwise limit; they avoid the Gaussian trap, displaying a rich asymptotic stochastic structure; and perform feature learning. Moreover, our experiments show that, for appropriate choices of activation functions, finite width ZeNNs perform exceedingly well in tasks involving the detection of high-frequency features of functions with low dimensional domains.

\subsection{The asymptotic behavior of MLPs}

The asymptotic behavior (large width limit) of networks is, at first hand, a conceptual issue, but the growing use and success of very large models is increasingly driving the need to better understand the way this limit 
works \cite{Jacot, Arora, Arora2, Roberts, li2023statistical, lai2023generalization, banerjee23a, carvalho2023wide, Grohs}. 
Although some successes have been achieved, many open questions remain unanswered and some paradoxical features remain unsettling. To understand them
it is instructive to start by considering the shallow network case with one-dimensional inputs and outputs (the generalization to deep networks and higher dimensional inputs and outputs will be discussed later in Section~\ref{sec:deep}). In such case the MLP architecture, with (continuous) activation $\sigma:\R\rightarrow\R$, can be encoded as a map from the space $\R^{4N}$ of trainable parameters $\theta$, the collection of weights $W^{(i)}_j$ and biases $b^{(i)}_j$, to the space of continuous functions 
\begin{equation}
\label{e1.1}
\begin{array}{rcl}
\Fc^N \, : \quad   \R^{4N} &\longrightarrow &  C(\R, \R)       \\
   \theta & \longmapsto & g_{\theta}^N  \,  ,
\end{array}
\end{equation}
defined by
\begin{align}
\label{e1.2}
\nonumber
\Fc^N(\theta)(x) &= g_{\theta}^N(x) \\ 
&=  \frac 1{N^\beta} \sum_{j=1}^{N} \left( W^{(2)}_{j} \ \sigma \left(  W^{(1)}_{j} x +b^{(1)}_j \right) + b^{(2)}_j \right) \, ,
\end{align}
with $N$ the number of perceptrons/neurons and $\beta> 0$ a normalization parameter. Let us also assume that the trainable parameters are randomly initialized using iid distributions with zero mean and finite standard deviation; note that by setting $\beta=1/2$ and fixing normal distributions this is in essence equivalent to the most standard initializations. 
We also stress the fact that in this MLP architecture, all neurons are equal.
Let us now discuss some undesirable features of MLPs: 

\emph{MLPs do not converge pointwise, in the infinite width limit.} Even though the universal approximation theorems~\cite{Leshno, Hornik} guarantee that, for non-polynomial activation functions, the image of the MLP map is dense as 
$N \to \infty$, the parameters become increasingly meaningless as we approach this limit and, in effect, we are removed from the framework of parametric models. 
In fact, for the standard normalization $\beta=1/2$, it is well known that, with probability one at weight initialization, $\limsup_{N\rightarrow\infty}g_{\theta}^N(x)=+\infty$; consequently, the limit of these neural networks is not a neural network.
%

\emph{Wide MLPs fall into a Gaussian trap.} Nonetheless, this ($\beta=1/2$) is the famous normalization associated to the CLT, which can be applied to show that $g_{\theta}^N$ converges, in distribution, to a Gaussian random field, a fact that was considered ``disappointing" from the time of its discovery~\cite{Neal} [pgs 48 and 161].
%
%
%
Indeed, this (weaker) notion of convergence is a trap, since increasing the capacity of MLPs in terms of width forces us to work with Gaussian models, where most information regarding architecture and initialization is lost and so is the possibility of having limiting parametric models. 
To complicate matters, the gradient components of the loss function in parameter space converge uniformly to zero at the rate of $N^{-1}$.

One could try to overcome these difficulties by changing the normalization parameter $\beta$. For instance, if we set $\beta=1$, now the Law of Large Numbers applies and shows that, with probability one at initialization, the model does have a well defined pointwise limit: unfortunately this limit is  the zero function, its average, which severely weakens the model's initial expressivity. 
Also, the gradient components of the loss function will decay even faster to zero as $N^{-2}$.  
In summary, changing $\beta$ may fix some problems at the expense of making others worst. 

\emph{Wide MLPs lose the ability to perform feature learning.} A celebrated asymptotic feature of MLPs is that, in the infinite width limit, they fall into a (fixed) kernel regime; the corresponding kernel is known as Neural Tangent Kernel (NTK) and is, with high probability at initialization, constant in parameter space. This clearly simplifies the analysis of the learning dynamics, making it essentially linear, but comes at a high cost; for instance, a fixed kernel disables {\em feature learning}. Basically, in the kernel learning regime of wide MLPs, the features (which, to make things more concrete, can be seen as the outputs of the last hidden layer) are randomly chosen at initialization from a fixed set of possibilities, established a priori without reference to data; the role of training is then restricted to the use of data to find appropriate coefficients to approximate the objective function in terms of the fixed features. In particular, we cannot perform {\em transfer learning}, where a set of features learned from a given data set is later reused and fine-tuned  
in a different learning task. Feature learning can thus be related to the existence of a generalized kernel that depends on parameters and therefore can be adapted, via training; it is this later perspective that we will adopt here.   


There is one further issue with the MLP architecture that, although of a different flavor, we would like to address:  

\emph{MLPs have difficulties in learning high frequencies.}
Is is a well known fact that, both narrow and wide, MLPs suffer from a ``spectral bias"~\cite{Rahaman:spectral_bias}, i.e., a tendency to learn the low frequency structure of data first, in the earlier stages of training, and then struggle to detect higher frequencies. This is in line with the standard intuitive picture that MLPs have the tendency to prioritize the ``low complexity" structures that recur across an entire data set. In practice, however, there are plenty of examples, from time series analysis to image processing (see Section~\ref{sec: Experiments}) where the high frequency structure is of paramount importance, making MLPs, alone, unsuited for such tasks. 

\subsection{The introduction of ZeNNs}

Hopefully, it is by now clear that to avoid the referred difficulties we need to change the MLP architecture~\eqref{e1.2}. 
Inspired by harmonic analysis we propose the following new architecture:

\begin{definition}
\label{ZeNN_def}
Zeta Neural Networks (ZeNNs) with one hidden layer (scalar inputs and scalar outputs) are given by
\begin{align}
\label{e1.5}
\nonumber
\Zc^N(\theta)(x) &= f_{\theta}^N(x) \\
&= \, \sum_{j=0}^{N}  \frac{1}{j^\alpha}   \, \left( W^{(2)}_{j} \ \sigma\left( W^{(1)}_{j}  j x + b^{(1)}_j \right) + b^{(2)}_j \right) \, ,
\end{align}
where $\alpha > 0$ is a fixed convergence parameter. 
\end{definition}

\begin{remark}
\begin{itemize}
\item[]
\item Note the explicit asymmetry with respect to permutation of the neurons. So, instead of averaging over identical perceptrons, a ZeNN sums over different frequency (or rescaled) perceptrons 
$$p^{(j)}_{\hat \theta}=  W^{(2)} \ \sigma \left( W^{(1)}_{j}  j x + b^{(1)}_j \right)  + b^{(2)}   \, ,$$ 
which are weighted by a non-learnable decaying factor $j^{-\alpha}$, introduced to enforce (pointwise) convergence. 
\item Note also that we can easily apply standard results~\cite{Leshno} to show that, provided $\sigma$ is non-polynomial, ZeNNs are also Universal approximators, i.e., the image of $\Zc^N$ is dense, as $N\rightarrow \infty$.  
\item For the sake of generality it is nice to impose almost no restrictions at the level of the activation functions a priori. Later tough, we will see that, in some situations, it is relevant to choose the activations in such way that the collection of maps 
$x\mapsto\sigma \left(j x \right)$
forms an $L^2$--orthogonal system.  
\end{itemize}
\end{remark}

\subsection{ZeNNs' contributions}

This article proves that, under appropriate yet quite general assumptions, ZeNNs do not suffer from the previously identified asymptotic pathologies of MLPs. More concretely, in Section \ref{sec:main_results} we shall see that, they \emph{converge pointwise} leading to well defined limiting parametric models (see Theorem~\ref{thm:convergence}), that they exhibit a wide variety of stochastic behaviors which, in particular, allows them to \emph{escape the Gaussian trap} set by the CLT (see Theorem~\ref{thm:Cumulants} and Remark \ref{rem:Cumulants at x=0}), and that ZeNNs perform feature learning even in their infinite width limit; the later is established via the introduction of a natural adaptation of the NTK to the current context, the ZeNTK. Section \ref{sec:deep} explores proposals for deep ZeNNs, and \ref{sec: Experiments} makes experimental tests, using in particular the referred proposals of deep ZeNNs, which indicate that, as expected, ZeNNs \emph{can learn high frequencies}, for problems with low dimensional input space.


\subsection{Related results}

The study of wide neural networks has a long tradition, with by now classical results establishing that all functions, in appropriate regularity classes, can be arbitrarily approximated by sufficiently wide MLPs with non-polynomial activations~\cite{Leshno, Hornik}, and that MLPs converge, in distribution, to Gaussian processes, when the width of all hidden layers goes to infinity~\cite{Neal, Lee}. 
Still concerning this limit, a fundamental finding was done in~\cite{Jacot}, where it was shown that the evolution of outputs during training is determined by a kernel, the NTK, which, with high probability, is independent of parameters. This result was later extended in~\cite{Yang2019ScalingLO,Arora}. 

The difference in performance between finite and infinite width networks has been the subject of considerable attention with somewhat conflicting observations and results~\cite{Lee:finite_vs,Yao:vs,Aitchison,Pleiss}; nonetheless, the most common trend being that finite-width networks tend to outperform their infinite width counterparts. Several reasons have been advocated to explain these differences: In~\cite{Pleiss} it was for instant argued, using Deep Gaussian  Networks, that in the infinite width limit depth becomes meaningless. Other results~\cite{Yang:tensor_iv,Aitchison} have observed that, in the infinite width kernel regime, feature learning is disabled; these works also argue that this impairs performance and has other negative consequence such as removing the possibility of using pre-training as a viable strategy. 

Before ours, proposals had already been presented to avoid some of the difficulties related with the infinite width asymptotic behavior of MLPs. ~\cite{Yang:tensor_iv} keeps the MLP architecture but changes the normalization in a clever way; the proposed $abc$--normalization acts both layer-wise, but also at the level of the learning rate and by fine-tuning the $abc$ parameters this allows to restore feature learning in the infinite width limit; nonetheless, a cost to pay is that models are always set to zero at initialization. The use of bottleneck layers~\cite{Aitchison}, layers that are kept at finite width even when others are taken to their infinite width limit, was also advocated as a way to retain feature learning.

Deviations from Gaussianity that occur in the large but finite width case have been studied for instance in~\cite{Roberts}, using techniques from effective quantum field theory, and in~\cite{carvalho2023wide} using the Edgeworth expansion.   

The spectral bias of MLPs was identified in~\cite{Rahaman:spectral_bias} and later related to the spectra of the NTK in~\cite{cao:spectral_bias,Tancik:fourier_features,Wang:fourier_features}. 
Several strategies were proposed to overcome this obstacle. Specially relevant to us here is the introduction of a non-learnable Fourier features embedding, which provides an elegant solution~\cite{Tancik:fourier_features} and leads to very interesting results, for problems with low dimensional input domains (see Sections~\ref{sec:deep} and~\ref{sec: Experiments} for more details). A related approach, leading to remarkable results in diverse tasks, is provided by SIREN networks~\cite{sitzmann2019siren},  whose first layer can be described as a learnable Fourier features layer, followed by hidden dense layers with sine activations. We would also like to refer to~\cite{Liu:multi_scale} where non-learnable radial scale parameters are introduced, at the level of the inputs, to aid PINNs in determining high frequencies of solutions of specific differential equations.


\section{Main theoretical results: MLP versus ZeNN}\label{sec: Theoretical Results}
\label{sec:main_results}

In this section we present our main theoretical results, whose proofs can be found in the appendices. The first establishes that, under suitable conditions, ZeNNs converge uniformly and therefore pointwise. We consider a sequence of parameters $\hat{\theta}_1, \ldots , \hat{\theta}_N , \ldots $ which we regard as random variables with values in $\mathbb{R}^4$. Let $\hat{\nu_j}$, for $j \in \mathbb{N}$, be the measures on $\mathbb{R}^4$ associated with these random variables. A theorem of Kolmogorov guarantees that these induce a measure on $(\mathbb{R}^{4})^{\mathbb{N}}$ which we shall denote by $\nu^\infty$.

\begin{theorem}[ZeNNs converge poitwise]\label{thm:convergence}
 Let $\sigma$ be continuous and have at most polynomial growth of order $k \in \mathbb{N}$, i.e it satisfies $|\sigma(x)| \lesssim 1+|x|^k$. Further assume the measures $\widehat{\nu_j}$ to have uniformly bounded first moments. Then, for $\alpha>k+1$, the sequence of asymmetric maps $\widehat\Zc^N$ 
    \begin{equation}
\label{e1.12}
\begin{array}{rcl}
\widehat \Zc^N \, : \quad   \left(\R^4\right)^{\N} &\longrightarrow &  C(\R, \R)       \\
   \theta & \longmapsto & \Zc^N \circ P^{(N)} \, (\theta) =\,  f_{\theta}^N  
\end{array}
    \end{equation}
     converges uniformly in compact subsets of $\mathbb{R}$, and a.e. in $\left(\left(\R^4\right)^{\N}, \nu^\infty\right)$. In particular, for $\nu^\infty$-a.e. $\theta \in \left(\R^4\right)^{\N}$ and all $x \in \mathbb{R}$
    \begin{align}
\widehat \Zc^\infty(\theta)(x)  &= \lim_{N \to \infty} \widehat \Zc^N(\theta)(x) \\
&=
\sum_{j=1}^{\infty}  \frac{1}{j^\alpha}   \, \left(  W^{(2)}_{j} \ \sigma\left( W^{(1)}_{j}  j x + b^{(1)}_j \right) +  b^{(2)}_j \right) \, .
    \end{align}
\end{theorem}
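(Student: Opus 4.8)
The plan is to run a randomized Weierstrass $M$-test on each interval $[-R,R]$ and then to patch these together over a countable exhaustion of $\R$, so as to obtain a single full-measure set of parameters on which uniform (hence pointwise) convergence holds on every compact. First I would write the $j$-th summand as
\[
t_j(\theta)(x) \;=\; \frac{1}{j^\alpha}\Bigl( W^{(2)}_{j}\,\sigma\bigl(W^{(1)}_{j}\, j x + b^{(1)}_j\bigr) + b^{(2)}_j\Bigr),
\]
so that $\widehat\Zc^N(\theta)=\sum_{j=1}^{N} t_j(\theta)$ (the $j=0$ contribution, if present, is a single fixed continuous function and is irrelevant to convergence). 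Each $t_j(\theta)$ is continuous and depends only on the $j$-th coordinate block $\hat\theta_j\in\R^4$, whose law under $\nu^\infty$ is $\widehat{\nu_j}$ — which is in fact the only property of $\nu^\infty$ that the argument will use (no independence across $j$ is needed).

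Now fix $R>0$. For $|x|\le R$ the polynomial growth bound $|\sigma(u)|\lesssim 1+|u|^k$ gives, uniformly in $j\ge1$,
\[
\sup_{|x|\le R}\bigl|\sigma\bigl(W^{(1)}_{j}\, j x + b^{(1)}_j\bigr)\bigr| \;\lesssim\; 1 + \bigl(|W^{(1)}_{j}|\,jR + |b^{(1)}_j|\bigr)^{k} \;\lesssim_R\; j^{k}\bigl(1+|W^{(1)}_{j}|^{k}+|b^{(1)}_j|^{k}\bigr),
\]
whence the random majorant
\[
M_j^R(\theta):=\sup_{|x|\le R}\bigl|t_j(\theta)(x)\bigr| \;\lesssim_R\; j^{\,k-\alpha}\Bigl(|W^{(2)}_{j}|\bigl(1+|W^{(1)}_{j}|^{k}+|b^{(1)}_j|^{k}\bigr)+|b^{(2)}_j|\Bigr).
\]
The moment hypotheses on the $\widehat{\nu_j}$ then give $\E_{\widehat{\nu_j}}\!\left[M_j^R\right]\lesssim_R j^{\,k-\alpha}$ with a constant uniform in $j$, so that
\[
\sum_{j\ge1}\E_{\nu^\infty}\!\left[M_j^R\right]=\sum_{j\ge1}\E_{\widehat{\nu_j}}\!\left[M_j^R\right]\;\lesssim_R\;\sum_{j\ge1} j^{\,k-\alpha}\;<\;\infty,
\]
the last series converging precisely because $\alpha>k+1$. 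By Tonelli's theorem $\E_{\nu^\infty}\!\bigl[\sum_{j}M_j^R\bigr]<\infty$, so there is $A_R\subset\left(\R^4\right)^{\N}$ with $\nu^\infty(A_R)=1$ and $\sum_{j}M_j^R(\theta)<\infty$ for every $\theta\in A_R$; for each such $\theta$ the Weierstrass $M$-test gives $\widehat\Zc^N(\theta)\to\widehat\Zc^\infty(\theta)$ uniformly on $[-R,R]$, and the uniform limit of continuous functions is continuous.

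To finish I would take $A:=\bigcap_{m\in\N}A_m$, still of full $\nu^\infty$-measure; for every $\theta\in A$ the series converges uniformly on $[-m,m]$ for all $m$, hence uniformly on every compact subset of $\R$, so $\widehat\Zc^\infty(\theta)\in C(\R,\R)$ and, in particular, the series converges pointwise at every $x\in\R$ to the stated expression. I expect the only genuine obstacle to be the uniform-in-$j$ estimate for $M_j^R$: one has to see that the linear-in-$j$ rescaling inside $\sigma$, combined with polynomial growth of order $k$, produces exactly the factor $j^{k}$, which the non-learnable weight $j^{-\alpha}$ turns into the summable tail $\sum_{j}j^{k-\alpha}$ exactly when $\alpha-k>1$; the remaining ingredients — Tonelli's theorem, the $M$-test, and the countable exhaustion — are routine.
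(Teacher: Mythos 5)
Your proposal is correct, and the core analytic estimate is the same as the paper's: polynomial growth of $\sigma$ plus the factor $j$ inside the activation produce a $j^{k}$ growth that the weight $j^{-\alpha}$ converts into a tail of order $j^{k-\alpha}$, summable exactly when $\alpha>k+1$. Where you diverge is in the probabilistic packaging. You bound $\E\bigl[\sup_{|x|\le R}|t_j|\bigr]$ directly, sum, invoke Tonelli to get an a.s.\ finite random majorant, and conclude by the Weierstrass $M$-test; the paper instead applies Markov's inequality to the supremum of tail partial sums $\sup_{N_1,N_2>N}|\Fc^{N_2}-\Fc^{N_1}|$, obtains a bound $C_k/(\epsilon N^{\delta})$, and then explicitly decomposes the divergence set as a countable union $\bigcup_m A^{(m)}$ with $A^{(m)}=\bigcap_N A^{(m)}_N$ to show it is null. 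Your route is shorter and buys two things the paper leaves implicit: absolute convergence of the series (hence unconditional convergence, not just Cauchy-ness of partial sums), and the final intersection $A=\bigcap_m A_m$ over a countable exhaustion, which is needed to get a \emph{single} full-measure set on which convergence holds on \emph{every} compact — the paper fixes one compact $I$ and never performs this step. One caveat, shared equally by the paper's proof: the bound $\E_{\widehat{\nu_j}}[M_j^R]\lesssim_R j^{k-\alpha}$ requires control of $\E\bigl[|W^{(2)}_j|\,|W^{(1)}_j|^{k}\bigr]$ and $\E\bigl[|W^{(2)}_j|\,|b^{(1)}_j|^{k}\bigr]$, which does not follow from uniformly bounded \emph{first} moments alone; one needs either independence within each block $\hat\theta_j$ together with uniformly bounded $k$-th moments (this is exactly the factorization the paper performs, and matches the appendix restatement ``first $k$-moments''), or uniformly bounded moments up to order $k+1$ via H\"older. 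You should state which of these you are assuming rather than writing ``the moment hypotheses then give''.
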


We turn now to a second result, which shows that our proposed ZeNNs avoid the CLT Gaussian trap and can retain non-Gaussian features even in the infinitely wide limit. The statement requires a little preparation. 

Consider a ZeNN, with $N \in \mathbb{N}$ hidden neurons,
$$f_{\theta}^N(x) = \sum_{j=1}^N \frac{1}{j^\alpha} p_{\hat \theta_j}(jx)   ,$$
where for $\hat \theta=(W^{(2)},b^{(2)}, W^{(1)},b^{(1)})$
$$p_{\hat \theta}(x)  = W^{(2)}\sigma(W^{(1)} x + b^{(1)}) + b^{(2)}, $$ 
denotes a standard perceptron. Now, consider all tuples of the parameters $\hat \theta_j \sim \hat \theta$ to be independent and identically distributed random variables. Then, for any fixed $x \in \mathbb{R}$, we can regard both $p_{\hat \theta}(x)$ and $f_{\theta}^N(x)$ also as random variables. 

\begin{theorem}[ZeNNs escape the Gaussian trap]\label{thm:Cumulants}
Let $x \in \mathbb{R}$ and $r \in \mathbb{N}$ be fixed. Denote the $r$-cumulant of $f_{\theta}^N(x)$ by $\lambda^{(r)}_N(x)$ and that of $p_{\hat \theta}$ by $\lambda^{(r)}(x)$, then
    $$\lambda^{(r)}_N(x) = \sum_{j=1}^N \frac{\lambda^{(r)}(jx)}{j^{r\alpha}}.$$
Furthermore, if $\sigma$ has at most polynomial growth of order $k \in \mathbb{N}$, i.e.  satisfies $|\sigma(x)| \lesssim 1+|x|^k$, and the moments of $\hat \theta$ are bounded, then, for $\alpha>k+1$ the ZeNN converges pointwise a.e. and the limiting cumulants are finite and satisfy
    $$\lim_{N \to + \infty} \lambda^{(r)}_N(x) = \sum_{j=1}^{+\infty} \frac{\lambda^{(r)}(jx)}{j^{r\alpha}}.$$
\end{theorem}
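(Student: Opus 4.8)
The strategy is to nail down the finite-width cumulants by pure cumulant algebra, and then to promote the identity to the limit by an absolute-convergence estimate that recycles the polynomial-growth bookkeeping already used for Theorem~\ref{thm:convergence}.

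\emph{Step 1: the finite-$N$ identity.} Write $f_\theta^N(x)=\sum_{j=1}^N X_j$ with $X_j:=j^{-\alpha}p_{\hat\theta_j}(jx)$. Since the $\hat\theta_j$ are i.i.d., the $X_j$ are independent, and as soon as $p_{\hat\theta}(jx)$ has a finite $r$-th moment — which is implicit in the very definition of $\lambda^{(r)}$ and is guaranteed outright under the ``Furthermore'' hypotheses — each $X_j$ has a well-defined $r$-cumulant. The claim then follows from two elementary properties of cumulants: additivity over independent summands, $\kappa_r\!\big(\sum_j X_j\big)=\sum_j\kappa_r(X_j)$, and homogeneity, $\kappa_r(cX)=c^r\kappa_r(X)$, both read off from the corresponding identities for cumulant generating functions. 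Hence
\[
\lambda^{(r)}_N(x)=\sum_{j=1}^N\kappa_r(X_j)=\sum_{j=1}^N\frac{1}{j^{r\alpha}}\,\kappa_r\!\big(p_{\hat\theta_j}(jx)\big)=\sum_{j=1}^N\frac{\lambda^{(r)}(jx)}{j^{r\alpha}},
\]
the last step using $\hat\theta_j\sim\hat\theta$.

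\emph{Step 2: a polynomial bound on $\lambda^{(r)}(jx)$.} Assume now $|\sigma(y)|\lesssim 1+|y|^k$ and that the moments of $\hat\theta$ up to order $r(k+1)$ are finite. From
\[
|p_{\hat\theta}(jx)|\;\lesssim\;|W^{(2)}|\big(1+|W^{(1)}|^k|x|^k\,j^k+|b^{(1)}|^k\big)+|b^{(2)}|,
\]
raising to the $r$-th power and taking expectations yields $\mathbb{E}\,|p_{\hat\theta}(jx)|^r\lesssim 1+j^{rk}$ (the moment hypothesis is exactly what makes the resulting products of components of $\hat\theta$ integrable, with implied constant depending on $x,r,k$ and the law of $\hat\theta$). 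The moment--cumulant formula then expresses $\lambda^{(r)}$ as a fixed, $r$-dependent linear combination of products $\prod_i\mu_{m_i}$ of the moments $\mu_m=\mathbb{E}[p_{\hat\theta}(jx)^m]$ with $\sum_i m_i=r$; by Lyapunov's inequality $|\mu_m|\le(\mathbb{E}|p_{\hat\theta}(jx)|^r)^{m/r}$, so each such product is bounded by $\mathbb{E}|p_{\hat\theta}(jx)|^r$, whence $|\lambda^{(r)}(jx)|\lesssim\mathbb{E}|p_{\hat\theta}(jx)|^r\lesssim 1+j^{rk}$.

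\emph{Step 3: the limit.} Step 2 gives $\sum_{j\ge1}|\lambda^{(r)}(jx)|\,j^{-r\alpha}\lesssim\sum_{j\ge1}\big(j^{-r\alpha}+j^{-r(\alpha-k)}\big)$, and both right-hand series converge because $\alpha>k+1$ forces $r\alpha\ge\alpha>1$ and $r(\alpha-k)\ge\alpha-k>1$ (in fact $\alpha>k+1/r$ already suffices for the $r$-th cumulant; the stated threshold is inherited from Theorem~\ref{thm:convergence} and is tight for $r=1$). Thus $\sum_{j\ge1}\lambda^{(r)}(jx)\,j^{-r\alpha}$ converges absolutely; combined with Step 1 this shows $\lim_{N\to\infty}\lambda^{(r)}_N(x)$ exists, is finite, and equals $\sum_{j=1}^\infty\lambda^{(r)}(jx)\,j^{-r\alpha}$. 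The a.e.\ pointwise convergence of the ZeNN itself is exactly Theorem~\ref{thm:convergence}, whose hypotheses coincide with those assumed here.

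\emph{The main obstacle.} Everything is soft except one point in Step 2: the naive bound $\prod_i\mu_{m_i}\lesssim(1+j^{rk})^{\#\text{factors}}$ would inflate the power of $j$ up to $r^2k$ and destroy summability, so one must exploit the constraint $\sum_i m_i=r$ via the Lyapunov interpolation $|\mu_m|\le(\mathbb{E}|p_{\hat\theta}(jx)|^r)^{m/r}$ to collapse every cumulant-monomial back to $\mathbb{E}|p_{\hat\theta}(jx)|^r$. The only other care needed is to pin down the precise moment requirement on $\hat\theta$ (order $r(k+1)$) ensuring $\mathbb{E}|p_{\hat\theta}(jx)|^r<\infty$.
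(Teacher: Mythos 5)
Your proof is correct and follows essentially the same route as the paper's: the finite-$N$ identity from independence (the paper derives the additivity and $r$-homogeneity of cumulants explicitly by computing the characteristic and cumulant generating functions of $f_\theta^N(x)$ as products/sums, which you invoke as standard cumulant algebra), followed by a polynomial-in-$j$ bound on $\lambda^{(r)}(jx)$ obtained by expanding the cumulant in moments and exploiting the constraint $\sum_i m_i=r$. The only real variation is how that constraint is used to avoid degree inflation: you collapse the moment products via Lyapunov interpolation $|\mu_m|\le(\mathbb{E}|p_{\hat\theta}(jx)|^r)^{m/r}$, whereas the paper bounds each $|m^{(r_i)}(jx)|\lesssim(1+|jx|)^{r_i k}$ and multiplies the factors out to $(1+|jx|)^{rk}$ — both correctly yield $|\lambda^{(r)}(jx)|\lesssim(1+|jx|)^{rk}$ and hence summability for $\alpha>k+1$.
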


\begin{remark}\label{rem:Cumulants at x=0}
    For $x=0$ we have
    $$\lambda^{(r)}_N(0) = \sum_{j=1}^N \frac{\lambda^{(r)}(0)}{j^{r\alpha}} = \lambda^{(r)}(0) \ \sum_{j=1}^N \frac{1}{j^{r\alpha}},$$
    and under the assumption that $r\alpha>1$ we can take the limit and obtain
    $$\lim_{N \to + \infty} \lambda^{(r)}_N(0) =  \lambda^{(r)}(0) \ \sum_{j=1}^{+\infty} \frac{1}{j^{r\alpha}}.$$
    In particular, it does not vanish if $\lambda^{(r)}(0) \neq 0$. We therefore find that $f_{\theta}^N(0)$ does not converge to a Gaussian distribution if any of the cumulants of order at least three of $p_{\hat \theta}(0)$ is nonzero. More details can be found in Appendix \ref{app:Cumulants}, where we also construct characteristic functions of ZeNNs exhibiting explicit non-Gaussian behavior. 

    In fact, we further find that little information is lost when passing to the limit $N \to +\infty$. Indeed, suppose $\hat \theta$ and $\hat \vartheta$ are two sets of parameters drawn from two different distributions such that at least one of the cumulants of $p_{\hat \theta}$ and $p_{\hat \vartheta}$ are different. Then, the $N \to +\infty$ limit of two ZeNNs with $\hat \theta_i \sim \hat \theta $  and $\hat \vartheta_i \sim \hat \vartheta$ follow different distributions.
\end{remark}

\begin{remark}[Comparison with MLPs]
    Recall that for a MLP given by the formula $g_{\theta}^N(x) = \frac{1}{\sqrt{N}} \sum_{j=1}^N p_{\hat \theta_j}(x)$, as $N \to + \infty$ the CLT guarantees that, for iid perceptrons of zero average, $g_{\theta}^N(x)$ converges to a Gaussian. In fact, as shown in remark \ref{rem:Cumulants of Standard NN} of the appendix \ref{app:Cumulants} we have
     $$\lambda^{(r)}_N = \frac{\lambda^{(r)}(x)}{N^{r/2-1}}, $$
     which converges to zero for $r>2$. This is what we have called the CLT Gaussian trap, or the loss of non-Gaussian features. Indeed, apart from the first two moments, all the information of the parameter distribution gets lost in the infinitely wide limit.
\end{remark}

Finally, we state our third main theoretical result. For ZeNNs, the relevant kernel responsible for training in function space is a modification of the NTK for MLPs~\cite{Jacot} which we call ZeNTK.  
To deduce/define it, consider the loss function
$$\mathcal{L}^N(\theta) = \frac{1}{2} \sum_{\mu} \left( \Zc^N(\theta)(X_\mu) - Y_\mu   \right)^2 ,$$
with parameters $\lbrace \theta(t) \rbrace_{t \geq 0}$ trained via gradient descent, i.e.
$$\dot{\theta}(t)=-\nabla \mathcal{L}^N ( \theta(t) ) . $$
Then, we find that
\begin{align} \label{label_evolution}
    \partial_t \Zc^N(\theta(t))(x) = - \sum_{\mu} K_\theta^N (x,X_\mu) ( \Zc^N(\theta)(X_\mu) - Y_\mu),
\end{align}
where 
\begin{equation}
K_\theta^N (x,y) := \sum_{p=1}^{4N} \frac{\partial \Zc^N}{\partial{\theta_p}}(x) \frac{\partial \Zc^N}{\partial{\theta_p}}(y)
\end{equation}
is the desired kernel, 
and will be referred to as the Zeta Neural Tangent Kernel (ZeNTK).

As previously discussed, feature learning is related to the existence of a non-constant, i.e. parameter dependent, (generalized) kernel. The idea being that the kernel encodes the features and, therefore, {\em feature-learning} corresponds to the ability of changing the kernel through training. Recall that in standard kernel methods, the kernel is fixed apriori and, in the context of MLPs, that is exactly what happens in the infinite width limit, where the NTK becomes constant (see also \cite{Yang2} for further discussion of this issue).      
We will now show that, in the context of ZeNNs, feature learning is preserved, even in the infinite width limit, by showing that the ZeNTK is not constant when $N \to +\infty$.


\begin{theorem}[ZeNNs learn features, even in the infinetly wide limit]\label{thm:Features Appendix}
Let $x, y \in \mathbb{R}$ and suppose $\dot{\sigma} \neq 0$ in a non-empty interval. Then, there is an open set $U \subset \mathbb{R}^{4\mathbb{N}}$ such that the map
$$\theta \in U \mapsto K_{\theta}^N (x,y)\;,$$
is not constant. We also have that for all $\theta \in U$ such that the limit $\lim_{N \to +\infty}K_{\theta}^N (x,y) = K_{\theta}^\infty(x,y)$ exists, $\theta \mapsto K_{\theta}^\infty(x,y) $ is also not constant.

Furthermore, if $\dot{\sigma} \neq 0$ except at a finite set of points, then the set $U$ can be taken to have full measure.
\end{theorem}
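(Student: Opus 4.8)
\emph{Proof sketch (proposed).} The plan is to make everything explicit at the level of a single frequency-$j_0$ neuron and to track only the dependence on one coordinate. Writing $z_j(x):=j\,W^{(1)}_j x+b^{(1)}_j$ and recalling that the ZeNTK of Appendix~\ref{app:Training} is the sum over the $4N$ coordinates of $\theta$ of the products of the matching partial derivatives of $f_{\theta}^N$, a direct computation of $\partial_{W^{(2)}_j}f_{\theta}^N$, $\partial_{b^{(2)}_j}f_{\theta}^N$, $\partial_{W^{(1)}_j}f_{\theta}^N$ and $\partial_{b^{(1)}_j}f_{\theta}^N$ yields (with the normalization of Appendix~\ref{app:Training})
\begin{equation}\label{eq:zentk-explicit}
K_{\theta}^N(x,y)\;=\;\sum_{j=1}^N\frac{1}{j^{2\alpha}}\Bigl[\,\sigma(z_j(x))\,\sigma(z_j(y))\;+\;1\;+\;(W^{(2)}_j)^2\,\dot\sigma(z_j(x))\,\dot\sigma(z_j(y))\,(j^2xy+1)\,\Bigr].
\end{equation}
The key structural observation is that a coordinate $W^{(2)}_{j_0}$ occurs only in the $j=j_0$ summand, and there only through the monomial $(W^{(2)}_{j_0})^2$. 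Hence, freezing all coordinates except $W^{(2)}_{j_0}$, the map $W^{(2)}_{j_0}\mapsto K_{\theta}^N(x,y)$ is the quadratic $A(\theta)\,(W^{(2)}_{j_0})^2+C(\theta)$, where $A(\theta)=j_0^{-2\alpha}\dot\sigma(z_{j_0}(x))\dot\sigma(z_{j_0}(y))(j_0^2xy+1)$ and $C(\theta)$ are independent of the coordinate $W^{(2)}_{j_0}$; it is non-constant along this line precisely when $A(\theta)\neq0$.

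It then remains to exhibit an open set of $\theta$ with $A(\theta)\neq0$. Let $I$ be the non-empty open interval on which $\dot\sigma$ does not vanish, and choose $j_0\in\{1,2\}$ with $j_0^2xy+1\neq0$ (at most one positive integer can violate this, and only when $xy<0$, so one of $j_0=1,2$ always works; the single degenerate corner $N=1$, $xy=-1$ must be handled separately, by moving inside the term $\sigma(z_{j_0}(x))\sigma(z_{j_0}(y))$, which is non-constant in $(W^{(1)}_{j_0},b^{(1)}_{j_0})$ since $\dot\sigma\neq0$ on $I$ forces $\sigma$ strictly monotone there). The affine map $(W^{(1)}_{j_0},b^{(1)}_{j_0})\mapsto(z_{j_0}(x),z_{j_0}(y))$ has linear part of determinant $j_0(x-y)$; when $x\neq y$ it is a bijection of $\mathbb{R}^2$ whose image meets the open square $I\times I$, and when $x=y$ one takes $W^{(1)}_{j_0}=0$, $b^{(1)}_{j_0}\in I$. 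In all cases there is an open box $B\subset\mathbb{R}^2$ of values $(W^{(1)}_{j_0},b^{(1)}_{j_0})$ mapping into $I\times I$, and on the cylinder set $U:=\{\theta:(W^{(1)}_{j_0},b^{(1)}_{j_0})\in B\}$, which is open in $\mathbb{R}^{4\mathbb{N}}$, we have $A(\theta)\neq0$; thus $\theta\mapsto K_{\theta}^N(x,y)$ is non-constant on $U$ for every $N\geq j_0$.

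For the infinite-width statement, \eqref{eq:zentk-explicit} passes to the $N\to\infty$ limit wherever it exists (by the same summability arguments as in Theorem~\ref{thm:convergence} and Appendix~\ref{app:Training}), giving $K_{\theta}^\infty(x,y)=\sum_{j\geq1}j^{-2\alpha}[\cdots]$ in which $W^{(2)}_{j_0}$ still enters only the $j=j_0$ term; replacing $W^{(2)}_{j_0}$ by $W^{(2)}_{j_0}+t$ therefore alters a single finite summand (so convergence is preserved) by exactly $A(\theta)\bigl((W^{(2)}_{j_0}+t)^2-(W^{(2)}_{j_0})^2\bigr)$, which is nonzero for suitable $t$ because $A(\theta)\neq0$ on $U$; hence $\theta\mapsto K_{\theta}^\infty(x,y)$ is non-constant on the (non-empty, by a.e.\ convergence of the ZeNTK) part of $U$ where it is defined. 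For the full-measure refinement, assume $\dot\sigma$ vanishes only on a finite set $F$ and again fix $j_0$ with $j_0^2xy+1\neq0$; then $A(\theta)\neq0$ as soon as $z_{j_0}(x)\notin F$ and $z_{j_0}(y)\notin F$, and each of these conditions removes only finitely many affine lines from the $(W^{(1)}_{j_0},b^{(1)}_{j_0})$-plane. Thus $U:=\{\theta:\,z_{j_0}(x)\notin F,\ z_{j_0}(y)\notin F\}$ is open, its complement lies in a finite union of affine hyperplanes of $\mathbb{R}^{4\mathbb{N}}$, and $U$ has full $\nu^\infty$-measure whenever the initialization marginals are absolutely continuous.

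The computation itself is elementary, so the real difficulties are bookkeeping: interpreting ``open set'' and ``full measure'' in the infinite-dimensional parameter space via cylinder sets and the product measure $\nu^\infty$; disposing of the degenerate pair $(N,xy)=(1,-1)$, where the $W^{(2)}_{j_0}$-trick collapses and one must argue inside the $\sigma$-term; and importing from Appendix~\ref{app:Training} the precise growth and moment hypotheses under which the ZeNTK series converges, so that the $K^\infty$ assertion is non-vacuous. One should also verify that the normalization convention used for the ZeNTK in the appendix agrees with \eqref{eq:zentk-explicit} up to a nonzero factor.
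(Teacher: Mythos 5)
Your proposal is correct and follows essentially the same route as the paper's own proof: both isolate the purely quadratic dependence of $K_\theta^N(x,y)$ on a single second-layer weight $W^{(2)}_{j_0}$ (the paper phrases this as a nonvanishing second derivative $\partial^2 K_\theta^N/(\partial W^{(2)}_{j_0})^2 = j_0^{-2\alpha+2}(xy+j_0^{-2})\,\dot\sigma(z_{j_0}(x))\dot\sigma(z_{j_0}(y))$, you as a nonconstant quadratic along that coordinate line) and then produce an open, respectively full-measure, set where the coefficient is nonzero. Your treatment is in fact slightly more careful than the paper's, since you explicitly dispose of the degenerate factor $xy+j_0^{-2}=0$ by switching to another index $j_0\in\{1,2\}$ (and to the $\sigma\sigma$ term when $N=1$, $xy=-1$), a case the paper's argument passes over.
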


\section{Deep ZeNNs}
\label{sec:deep}

In this section we provide several proposals on how to include ZeNN like layers in the context of deep Neural Networks.  
In that setting  a layer is simply a function
$$\Phi^{(k)}:\R^{N_k} \rightarrow \R^{N_{k+1}},$$ 
and a (deep) Neural Network is the result of composing several layers 
\[
\Psi(x)=\Phi^{(L)} \circ \ldots \circ \Phi^{(1)}(x)\;.
\]
%

So, to describe new ZeNN like layers we only need to show how to construct appropriate functions $\Phi:\bbR^n\rightarrow\bbR^m$. 
We will now discuss the three concrete proposals that were used as first hidden layers in our experiments (see Section~\ref{sec: Experiments}).

\subsection{oZeNNs}
For simplicity we start with a layer with two-dimensional inputs. In that case, a natural generalization of the hidden layer in Definition~\ref{ZeNN_def} is to consider a map $\Phi:\bbR^2 \rightarrow \bbR^{N\times N}$, defined by
%
%
\begin{equation}
\label{oZeNN}
\Phi(x_1,x_2) =
\left[ \frac{1}{(j_1j_2)^{\alpha}}\sigma\big(W_{1j_1}j_1x_1+W_{2j_2}j_2x_2 + b_{1j_1}+b_{2j_2} \big)\right]_{(j_1,j_2)} 
\end{equation}
%
%
%
where the output is a matrix with indices $1\leq j_1,j_2\leq N$, $N$ is a hyperparameter fixing the maximal scale/frequency factor,  $\alpha\geq0$ is a decaying factor, just as before, and 
$[W_{ij}],[b_{ij}]\in \bbR^{2\times N}$ are matrices of learnable weights and biases. We refer to these layers as \emph{oZeNNs}, with the ``o'' coming form ``orthogonal'', since if we chose the activation function $\sigma$ to be, for instance, the sine function, set the weights to unity and the bias to zero, then the elements of the matrix provide a $L^2$--orthogonal system of functions. 

This has a natural generalization to higher dimensional inputs  by mapping $\vec{x}=(x_1,\ldots,x_d)\in\bbR^d$ to the $d$-tensor whose $\vec{j}=(j_1,\ldots,j_d)\in\mathbb{Z}^d_N$ entry is defined by 
\begin{equation}
\label{oZeNN_general}
\Phi(\vec{x}) = \left[ \frac{1}{(\prod_i j_i)^{\alpha}}\sigma\big(\sum_{i=1}^d(W_{ij_i}j_ix_i + b_{ij_i} )\big)\right]_{\vec{j}\in \mathbb{Z}^d_N}\;,
\end{equation}
where the $N^d$ tensor entries are indexed by the elements of $\mathbb{Z}^d_N$, the set of $d$--dimensional vectors with entries in $\mathbb{Z}_N=\{1,\ldots,N\}$. Now $[W_{ij}]$ and $[b_{ij}]$ are matrices in $\bbR^{d\times N}$. Unfortunately such suggestion is at the mercy of the curse of dimensionality; notice that this is not directly related to the growth of learnable parameters in this layer, which is ``only" of order $Nd$, but to the fact that the output of the layer is $\#\mathbb{Z}^d_N=N^d$ dimensional; therefore, if we connect it to a new dense layer with $N$ neurons, then the number of parameters in the new layer will be of order $N^{d+1}$.
In practice, even for two-dimensional inputs and a large $N$ the oZeNNs can become computationally expensive (see Section~\ref{sec: Experiments}).      

\subsection{radZeNNs}

A simple strategy to attenuate the burden of the curse of dimensionality is to choose only a subset of scales, which in our framework means to choose a subset of vectors in $\mathbb{Z}^d_N$. A first natural approach is to consider only the diagonal elements, i.e. the vectors of the form $(j,\ldots,j)$, for $j\in\mathbb{Z}_N$. In that case our layer becomes a map from  $\bbR^d$ to $\bbR^N$ with ouptut given by
\begin{equation}
\label{radZeNN}
\Phi(\vec{x}) = \left[ \frac{1}{j^{\alpha}}\sigma\left(\sum_{i=1}^d W_{ij}jx_i + b_{j} \right)\right]_{j\in \mathbb{Z}_N}\;. 
\end{equation}
This approach prioritizes the radial directions, so we name it {\em radZeNN}, and, with the exception of the convergence factor $j^{-\alpha}$, is similar to the construction proposed in~\cite{Liu:multi_scale}. Unfortunately, this corresponds to the loss of to much relevant information in terms of frequencies and these layers end up underperforming in our experiments (see Section~\ref{sec: Experiments}).    

\subsection{randoZeNNs}

To obtain better results while at the same time avoiding the curse of dimensionality, we consider yet another proposal, that we once again explain in the context of two-dimensional inputs, but can be easily generalized to higher dimensions. This proposal consists in fixing an integer $M$, such that $N\leq M\ll N^2$, and then randomly choosing a sequence of pairs of integers $(j_{1r},j_{2r})$, $1\leq r\leq M$, satisfying $1\leq j_{1r},j_{2r}\leq N$. We can then define a new layer by mapping $(x_1,x_2)\in\bbR^2$ to $\bbR^M$, with output defined by  
\begin{equation}
\label{randoZeNN}
\Phi(\vec{x})=\left[ \frac{1}{(j_{1r}j_{2r})^{\alpha}}\sigma\big((j_{1r}x_1W_{1r}+j_{2r}x_2W_{2r}) + b_{r} \big)\right]_{r\in \mathbb{Z}_M}
\end{equation}
where now $(W_{1r}),(W_{2r})$ and $(b_r)$, $1\leq r \leq M$, are vectors, in $\bbR^M$, of learnable weights and biases. We refer to these layers as \emph{randoZeNNs}. 

If we set the activation function to sines or cosines, there is a clear connection here with random Fourier features (FF)~\cite{Tancik:fourier_features}, which uses a non-learnable embedding layer 
\begin{equation}
\label{FF}
\bbR^2\ni x\mapsto [\sin(2\pi Bx),\cos(2\pi Bx)]\;,
\end{equation}
with the set of frequencies being randomly chosen by sampling $B\sim {\mathcal N}(0,\rho^2)$. 

The relevant differences between our randoZeNNs and FF are: the inclusion of a convergence factor in our randoZeNN layer and the fact that we are hard-coding the way our scale/frequency directions in frequency space, the $(j_{1r},j_{1r})\in\mathbb{Z}^2$, enter the network.  Concerning this last fact we note that, in practice, both proposals provide different ways to generate relevant frequency features. As we will see, at least in a simple image regression task, our randoZeNNs perform slightly better than Fourier features, and, interestingly, both seem to perform (slightly) better than the full oZeNN layer.

\subsection{KAZeNNs} 

We will now consider another (untested) proposal to achieve deepness with ZeNN like features by using the Kolmogorov--Arnold Theorem.  

Recall that this theorem guarantees that an arbitrary function of $d$ variables can be represented using $(2d+1)(d+1)$
functions of one variable:
\begin{equation}
f(x_1, \dots, x_d)
= \sum_{q=1}^{2d+1} \, \Phi_q\left(\sum_{p=1}^{d} \, \phi_{q,p}(x_p) \right)\;.
\end{equation}

Inspired by this result  \cite{liu2024kan} proposes an adaptation of the MLP architecture that they called Kolmogorov--Arnold Networks (KAN). 
Following this paper we will use the representation
of functions of $d$ variables 
by functions of one variable 
using maps $\tilde \Phi: \mathbb{R}^d  \longrightarrow  \mathbb{R}^m$ of the form
\begin{equation}
\label{eqKAN}
(x_1, \dots, x_d)  \longmapsto  \left(\sum_{p=1}^d \ \phi_{1p}(x_p), \dots , \sum_{p=1}^d \ \phi_{mp} (x_p)\right). 
\end{equation}
which can be more compactly represented in matrix notation using 
$$
\tilde \Phi=\Big[\;\phi_{q,p}\;\Big]_{\substack{p=1,\ldots,d \\ q=1,\ldots,m}}\;.
$$
 A KAN  can then be obtained by composing (at least) two of these maps
\begin{equation}
\label{KAN}
\Psi(x)=\tilde \Phi^{(L)} \circ \ldots \circ \tilde \Phi^{(1)}(x)\;,
\end{equation}
where $\tilde \Phi^{(\ell)}$ are $n_\ell \times n_{\ell-1}$ matrices (of functions). We define KAZeNNs as KANs~\eqref{KAN}  with the functions 
$\phi_{q,p}$ chosen to be shallow scalar ZeNNs,
as in \eqref{e1.5}.  

If we use (shallow) ZeNNs with $N$ units, and set all $n_k\sim d$, then the corresponding KAZeNN has $O(LN d^2)$ parameters. 
So, in conclusion, KAZeNN are a promising candidate to fight the curse of dimensionality.

\begin{figure}
    \centering
    \subfigure[]{\includegraphics[width=0.35\textwidth]{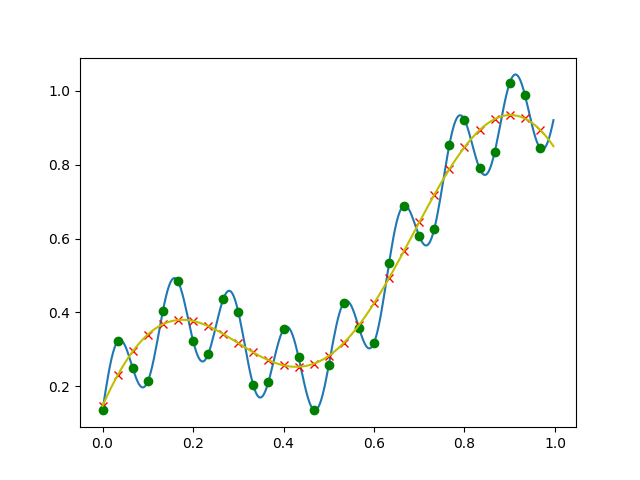}} 
    \subfigure[]{\includegraphics[width=0.35\textwidth]{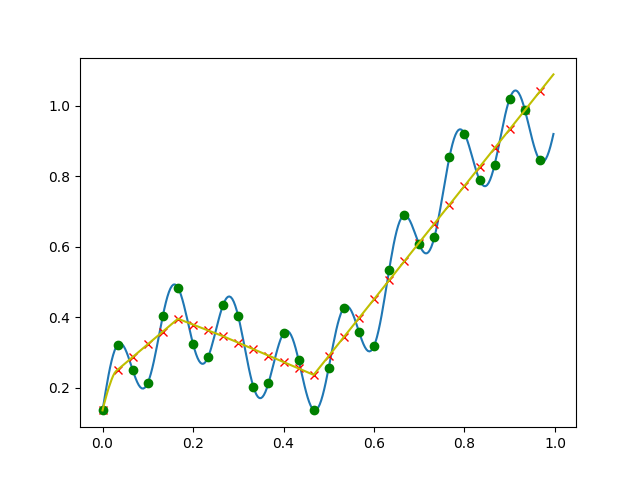}}
    \subfigure[]{\includegraphics[width=0.35\textwidth]{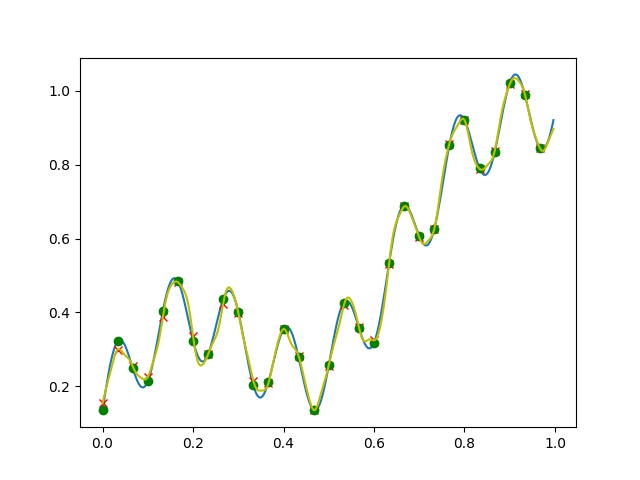}}
    \subfigure[]{\includegraphics[width=0.35\textwidth]{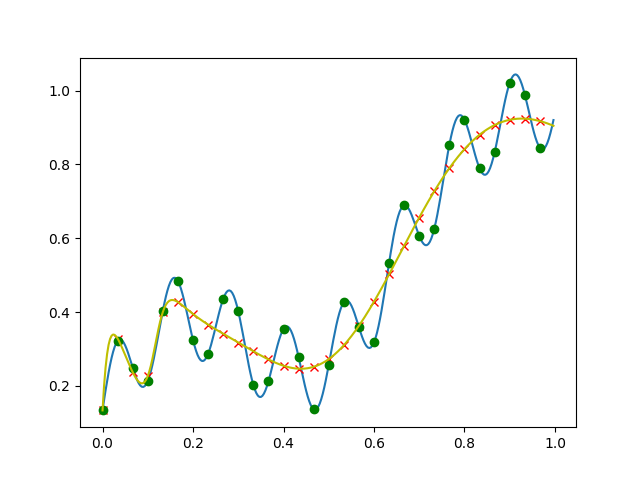}}
    \caption{Learning synthetic data generated by~\eqref{sintData2}, using different MLPs and ZeNNs. The blue line is the graph of the generating function, the green balls correspond to the training data, the red crosses are the predictions of the trained model and the yellow line is the graph of the model function. 
    (a) MLP with sine activation, 128 perceptrons, trained for 50.000 epochs. 
    (b) MLP with ReLU activation, 1024 perceptrons, trained for 1.000.000 epochs.
    (c) ZeNN network with $\alpha=1.1$, sine activation, with 64 perceptrons, trained for 15.000 epochs.
    (d) ZeNN network with $\alpha=1.1$, sigmoid activation, with 1024 perceptrons, trained for 1.000.000 epochs.
    }
    \label{fig:synt_data}
\end{figure}

\subsection{Deep issues}

In the context of deep networks, ZeNNs are most naturally suited for the first hidden layer, where they can efficiently encode frequency-based features of the input. The ZeNN structure allows to extract meaningful representations, and the deeper layers learn  other features rather than directly handling raw frequency components.


In fact, in our experiments below we will consider networks with 4 hidden layers where the first hidden layer is a ZeNN like layer (oZeNN, radZeNN or randoZeNN), and the remaining 3 layers are MLPs. Note that, in these experiments, we are using ``narrow'' layers, and for that reason the asymptotic problems associated to MLPs, discussed in previous sections, are not an issue. If one moves into the context of wide layers then we have to deal with the asymptotic pathologies of MLPs. In such case we propose to use a ZeNN like layer with no scale factor, or equivalently an MLP with a ZeNN convergent factor. This corresponds to a new layer defined by a map $\Phi:\bbR^d\rightarrow \bbR^N$ whose output is given by          
\begin{equation}
\label{convMLP}
\Phi(\vec{x})=\left[ \frac{1}{j^{\alpha}}\sigma\left(\sum_{i=1}^d W_{ij}x_i + b_{j} \right)\right]_{j\in \mathbb{Z}_N}\;. 
\end{equation}

In the shallow scalar case, this gives rise to the network
\begin{equation}
\label{shallow_convMLP}
\nonumber
 \tilde f_{\theta}^N(x)  
= \, \sum_{j=0}^{N}  \frac{1}{j^\alpha}   \, \left( W^{(2)}_{j} \ \sigma\left( W^{(1)}_{j}  x + b^{(1)}_j \right) + b^{(2)}_j \right) \,,
\end{equation}
%
%
%
that corresponds to~\eqref{e1.5} without the scale factor. It can be shown that these networks satisfy properties analogous to those presented in Section~\ref{sec: Theoretical Results}.

\section{ZeNNs learn high frequencies - experimental results}
\label{sec: Experiments}

In this section we report on some experimental results that show that ZeNNs excel at learning high frequency features. 

\subsection{1d problems with shallow networks}
\label{1d-Exps}

First we consider one-dimensional regression problems and compare the performance of various shallow networks, i.e., networks with only one hidden layer. We start by considering synthetic data generated by the function 
\begin{equation}
\label{sintData2}
y = x + 0.125\sin(10x) + 0.135\cos(5 x) + 0.115\sin(50 x)\;.
\end{equation}
As expected MLPs struggle to fit the data due to the presence of ``high'' frequency features. This happens even if we increase the number of perceptrons considerably, or increase the number of training epochs. Our results also suggest that changing the activation function has no significant impact on the performance of (shallow) MLPs and in the end these models are only able to learn the low frequency structure of the underlying function; we stress that this happens even if we use the sine function as activation (see Figure~\ref{fig:synt_data}, subplots a) and b)). 

On the contrary ZeNN models with sine activations excel at solving this problem (see Figure~\ref{fig:synt_data}, subplot c)), fitting the objective curve extremely well while using few resources, both in terms of parameters and learning time. We note however that with ZeNNs the choice of activation function plays a crucial role.  In fact, if we use ReLU or sigmoid activations the performance of the model decreases dramatically (see Figure~\ref{fig:synt_data}, subplots d)). 

To test our ZeNNs against one--dimensional real world data we consider the standard task of fitting the temperature timeseries of the Jena climate Dataset~\cite{Jena_data}.
As we can see in Figure~\ref{jena1} a simple shallow ZeNN can easily detect the high-frequency structure of the data, providing a good fit of the timeserires.

\subsection{Image regression}
\label{subsec:image}

\begin{figure}[t]
\centering
\includegraphics[width=8cm]{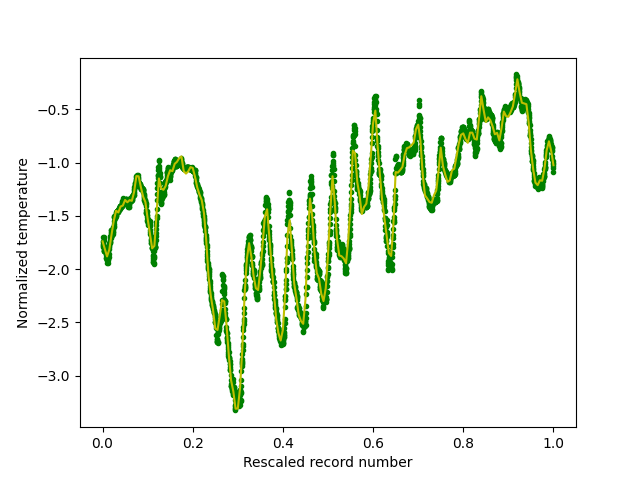}
\caption{ 
Curve fitting of the first 3000 temperature Jena data points, represented by green balls. Our fit (yellow line) uses a shallow ZeNN network with sine activation, 512 perceptrons and $\alpha=1.1$, trained for 250.000 epochs. 
}
\label{jena1}
\end{figure}

%
\begin{figure}
    \centering
    \subfigure[]{\includegraphics[width=0.25\textwidth]{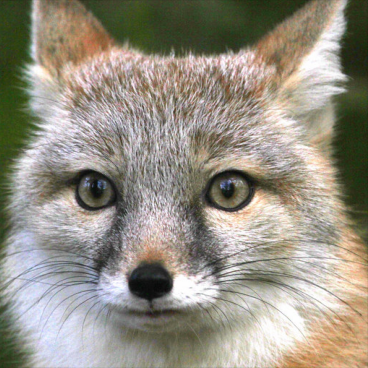}} 
    \subfigure[]{\includegraphics[width=0.25\textwidth]{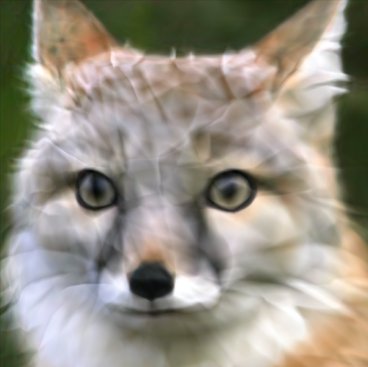}}
    \subfigure[]{\includegraphics[width=0.25\textwidth]{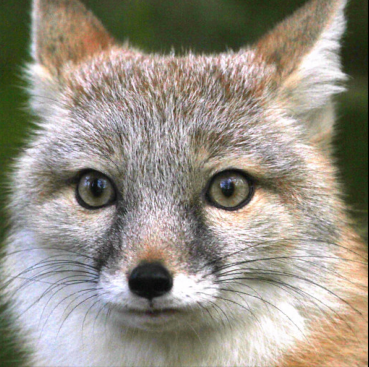}}
    \caption{Image regression task. (a) Ground truth image (source~\cite{FF_github}). (b) Image reconstructed using MLP. (c) Image reconstructed using the randoZeNN model.}
    \label{fox}
\end{figure}
%

To test our deep ZeNN models with two-dimensional data we consider the following image regression task: given an image, say Figure~\ref{fox}, we consider the function $F:{\mathcal D}\subset \bbR^2\rightarrow\bbR^3$ defined by mapping the (normalized) pixel coordinates $(x_1, x_2)\in\bbR^2$ to the (normalized) RGB codification $(y_1, y_2, y_3)\in\bbR^3$; the goal of the task is to approximate $F$ with a Neural Network. In a context of supervised learning we use $75\%$ of randomly selected pixel data as training data and use the remaining $25\%$ as a validation set. We track the success of the models in terms of the peak signal-to-noise ratio (PSNR) evaluated in the validation set; recall that in this case bigger is better. 

All the models we test have a similar global structure with 2d inputs, 3d outputs, with linear activation, and 4 hidden layers, with the last 3 being MLPs with ReLU activation and a fixed number of units. The distinction between the models occurs only at level of the first hidden layer: i) the {\em MLP} model has (as first hidden layer) another ReLU MLP layer; ii) the {\em oZeNN} model concatenates 2 oZeNN layers~\eqref{oZeNN}, one with sine activation and the other with cosine activation;  iii) the {\em radZeNN} model concatenates 2 radZeNN layers~\eqref{radZeNN}, one with sine activation and the other with cosine activation; iv) the {\em randoZeNN} model concatenates 2 randoZeNN layers~\eqref{randoZeNN}, one with sine activation and the other with cosine activation; v) the {\em FF} model uses a random Fourier features embedding~\eqref{FF}; vi) the {\em FF-trainable} model is similar to the previous but with matrix $B$ composed of trainable parameters. 

The results we present correspond to the best performance we were able to achieve using models following the previous constraints, by searching (in a non-exhaustive way) through hyperparameter space. For instance, in each case, we increased the number of units in the first hidden layer until we found diminishing returns in terms of validation PSNR. The specifications concerning these models are given in Appendix~\ref{app:specs}. 

The first conclusion from our experiments (see Figure~\ref{PSNR_all} and Figure~\ref{fox}) is that, once again, ZeNNs excel at learning high frequencies;  the exception being radZeNNs, which nonetheless clearly outperform MLPs, that, as expected, have a very poor performance in this task. Moreover, oZeNNs and randoZeNNs have an excellent performance on par with the FF models. In fact, when we exclusively compare the 4 top models (see Figure~\ref{PSNR_top}), we observe that randoZeNN has a (slightly) better performance than the remaining 3 contenders, and that, somewhat surprisingly, oZeNN has a (slightly) weaker result, even though it has a considerably larger number of parameters. 

Finally, we stress that we introduced the FF-trainable models to make sure that the better performance of randoZeNN, when compared to that of FF, was not just a consequence of the fact that the first layer of  randoZeNN has trainable parameters, while the first layer of a vanilla FF model does not.     


 %
\begin{figure}[t]
\centering
\includegraphics[width=7cm]{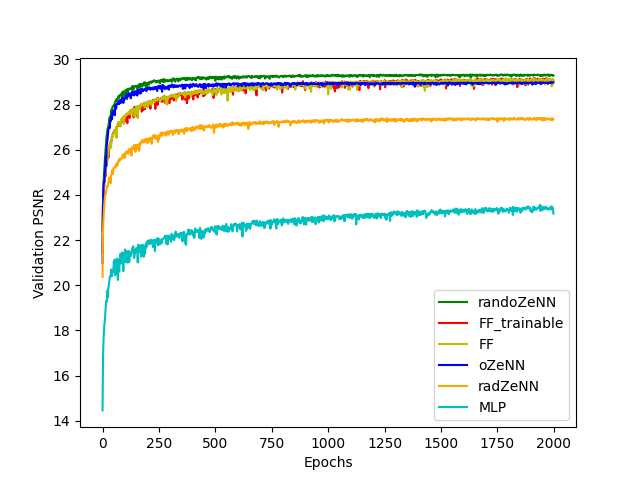}
\caption{ 
PSNR on validation set (bigger is better) along training in image regression task. Comparing 6 different architectures.   
}
\label{PSNR_all}
\end{figure}
\begin{figure}[t]
\centering
\includegraphics[width=7cm]{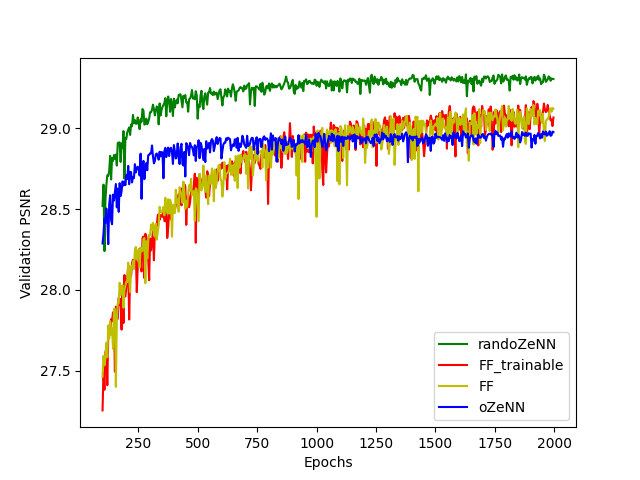}
\caption{ 
PSNR on validation set (bigger is better) along training in image regression task. Comparing the 4 top models.   
}
\label{PSNR_top}
\end{figure}

\subsection{PiNNs - solving differential equations using ZeNNs.}
\label{subsec:image}

We now report on the performance of (shallow) ZeNNs in solving differential equations. In coherence with our remaining experiments, we chose examples of equations whose solutions exhibit a highly oscillatory behavior. 

\subsubsection{Schr\"odinger's equation}

First we consider Schr\"odinger's equation, in $1+1$-dimensions and an infinite squared well potential: 
$$i \hbar \partial_t \psi(t,x) = - \frac{\hbar^2}{2m} \partial^2_x \psi(t,x) +V(x) \psi(t,x) $$
with the potential defined by
$$ V(x) = 
\left\{
\begin{array}{l}
0 \;\;,\;\text{ if } 0\leq x\leq 1 \;,\\
\infty \;\;,\;\text{ otherwise}\;.
\end{array}
\right.
$$
By separation of variables $\psi(t,x)=\tau(t)y(x)$, the spatial component of the solution is determined by the following boundary valued problem  
\begin{equation}
\label{IBVP1}
\left\{
\begin{array}{l}
y'' + k^2 y = 0 \\
y(0) = y(1) = 0\;,
\end{array}
\right.
\end{equation}
where $k = \frac{\sqrt{2mE}}{\hbar}$. The solutions of this problem are of the form 
\begin{equation} 
y(x) = A \sin(\pi n x)\quad \quad , \quad  k =n\pi \;,
\end{equation}
with $n$ a natural number that quantizes the energy level, and $A$ any real number; usually $A$ is fixed by imposing an extra normalization condition, which, in the context of quantum mechanics, is given by $\int_{-\infty}^{+\infty} y(x)dx=1$; here we will not impose any such condition, therefore we will allow the network to "choose" the constant $A$ freely.  

We consider this particular problem for the following reasons: 
\begin{itemize}
\item For ``large'' energies (large $n$'s) the solutions exhibit highly oscillatory behavior.  
\item As we saw, the exact solution is known, which provides a controlled setting to test our method. 
\item Finally, this is also an eigenvalue problem, meaning that besides finding the function $y=y(x)$ we also need to find the eigenvalue determined by $k$; this can be cumbersome in the context of classical methods, such as finite differences, but fits very naturally in the context of self-supervised learning with neural networks, i.e. in the context of PiNNs.
\end{itemize}

To solve initial boundary valued problem~\eqref{IBVP1}, we consider a parametric model 
\begin{equation}
\label{PiNN}
\begin{array}{rcl}
\Psi \, : \quad   \bbR^{4N+1} &\longrightarrow &  C(\R, \R)\times \bbR       \\
   (\theta, k) & \longmapsto & (y_{\theta},k)  
\end{array}
    \end{equation}
where $y_{\theta}:\bbR\rightarrow\bbR$ is a neural network model with parameter $\theta\in\bbR^{4N}$ and with architecture given by either~\eqref{e1.2} or~\eqref{e1.5}, and $k\in\bbR$ is an ``extra'' parameter designed to find the the solution's eigenvalue. Our goal is then to find parameters $(\theta^*,k^*)\in \bbR^{4N+1}$ such that the pair $(y_{\theta^*},k^*)$ solves~\eqref{IBVP1}. We search for approximate values for these optimal parameters by applying gradient descent 
\begin{equation}
\partial_t \theta_i (t) = - \partial_{\theta_i} {\mathcal L}(\theta,k)\quad\quad\text{ and } \quad\quad \partial_t k (t) = - \partial_{k} {\mathcal L}(\theta,k)\;,
\end{equation}
with loss given by 
\begin{equation}
{\mathcal L}(\theta,k) = \sum_{i=1}^M \left(y_{\theta}''(x_i)+k^2y_{\theta}(x_i)\right)^2 +
p_{\text{bdata}} \big(y_{\theta}^2(0)+y_{\theta}^2(1)\big)\;,
\end{equation}
where each $x_i\in [0,1]$, $i=1,\ldots,M$,  is a collection of training data points and $p_{\text{bdata}}\gg1$ is a parameter that tunes the importance of the boundary data in the loss function.   

We solved this problem using shallow networks (i.e. only 1 hidden layer, as in~\eqref{e1.2} and~\eqref{e1.5}) of 3 different kinds: i) {\em standard initialization MLPs}, corresponding to MLP models with relu or sine (inner) activation and standard weight initialization $W^{(1)},W^{(2)}\sim {\mathcal N}(0,1)$, regardless of the energy level $n$; ii)  {\em FF-trainable}, corresponding to MLPs with sine activations and adjustable inner weight initialization $W^{(1)}\sim {\mathcal N}(0,\rho^2)$ and $W^{(2)}\sim {\mathcal N}(0,1)$, which allows us to increase $\rho$ in order to find higher energy solutions; iii) {\em ZeNNs} with sine activation.   
Note that, in this shallow context, FF-trainable and SIREN models~\cite{sitzmann2019siren} are essentially the same.  

The results we obtained are easy to summarize: 

Standard initialization MLPs, with sine activation, are, in general, unable to find non trivial solutions of the problem at hand, if $n\geq 2$. With such architecture and initialization the neural network converges to the zero function and $k$ tends to diverge away from the desired value of $n\pi$; note that the zero function $y\equiv 0$, together with any choice of $k\in\bbR$, gives us a solution of the problem at hand, while non-trivial solutions are, modulo sign, related to a single choice of $k$. This tendency to converge to the trivial solution is a typical difficulty of using PiNNs to solve this type of problems and makes it even harder to find higher energy solutions. There are interesting proposals to circumvent these difficulties, for instance~\cite{Jin0} suggests adding a loss term that penalizes solutions with values close to zero and a loss term that penalizes the decrease of the eigenvalue below a specified threshold; using these and other techniques~\cite{Jin0} and~\cite{Jin} report solutions up to the 4th energy level; moreover their methods allow to use a previous solution as a stepping stone to finding the next solution in the energy ladder. 

In our experiments, we intentionally avoided imposing the referred extra penalty terms into our loss function to see if ZeNNs could determine higher energy solutions ``on their own''. It turns out that they can, see Figure~\ref{fig:Schrodinger10} for a remarkable fit of the 10th energy level solution, given a ZeNN with $N=64$ inner units and $\alpha=0.1$. Using a FF -learnable architecture with the same number of units and an appropriate choice of inner standard deviation ($\rho^2 = 30$) we obtained similar results. Nonetheless, we should warn that finding such high energy solutions required a good initial guess to the exact solution's eigenvalue $k^{*2}=(n\pi)^2$; in fact, we were required to choose $|k_0^2-k^{*2}|\approx1$ and after training we obtained $|k^2-k^{*2}|\approx 0.05$. This is clearly an undesirable featured, but should be compared with the fact that, even if we provide a standard initialization MLP with the correct eigenvalue at initialization, and use $N=512$ neurons, that architecture ends up converging, once again, to the trivial solution.           

\begin{figure}[t]
\centering
\includegraphics[width=7cm]{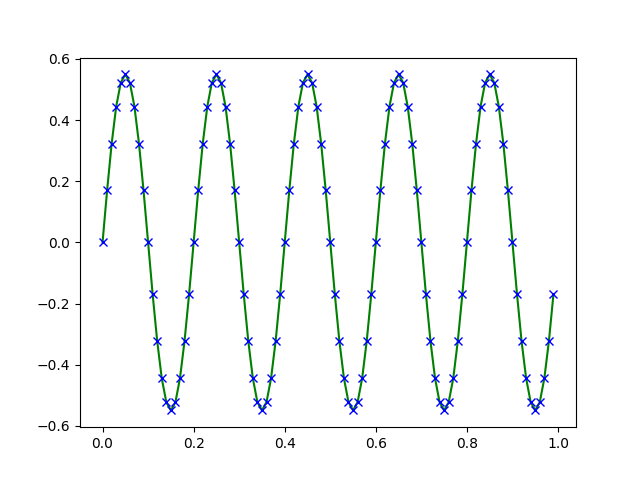}
\caption{ 
Solving Schr\"odinger's equations using ZeNNs, energy level $n=10$. The green line corresponds to the approximate solution generated by a shallow ZeNN, with 64 neurons and $\alpha=0.1$. The blue crosses are sample points form the exact solution with the same $L^2$ normalization.    
}
\label{fig:Schrodinger10}
\end{figure}

It should be possible to leverage the use of ZeNNs together with the methods in~\cite{Jin} in order to avoid our need to fine-tune the seed for the eigenvalue and, at the same time, allowing to surpass the energy bottlenecks implicit in~\cite{Jin}. 

\subsubsection{Sinc like solutions}

A natural criticism to the experiences in the previous section is that we obtained good results because we used linear comninations of sine functions, both with ZeNNs and Fourier Features, to approximate a solution that is exactly a sine wave. To show that ZeNNs generalize beyond such scope we consider the problem 
\begin{equation}
\label{IBVP2}
\left\{
\begin{array}{l}
xy'' + 2y' + L^2xy = 0 \\
y(0) = 0\;,
\end{array}
\right.
\end{equation}
where $L>0$ is a parameter. 

The solution to this problem is given by  
$$y(x) = \text{sinc}(Lx) = \frac{\sin(Lx)}{Lx}\;.$$
Once again, we find ourselves in a controlled setting and with means to leverage the frequency spectrum of our solutions -- the increase of the parameter $L$ leads to solutions with a larger frequency spectrum.   

We used the exact same 3 types of architectures discussed in the previous section. As expected, standard initialization MLPs provide good solutions, and fast convergence, if the oscillation parameter is small ($L\sim 1$), but are unable to find solutions once $L$ starts increasing. On the other hand ZeNNs excel at finding good solutions for large $L$ (see Figure~\eqref{fig:ZeNN_Sinc}), by using a number of inner units of the order of $L$, and FF - learnable networks also provide good solutions, in par with the ZeNN results, for instance by setting the number of units to be of order $L$ and the inner standard deviation $\rho^2$ to be of order $L/2$.  


%
\begin{figure}[t]
\centering
\includegraphics[width=7cm]{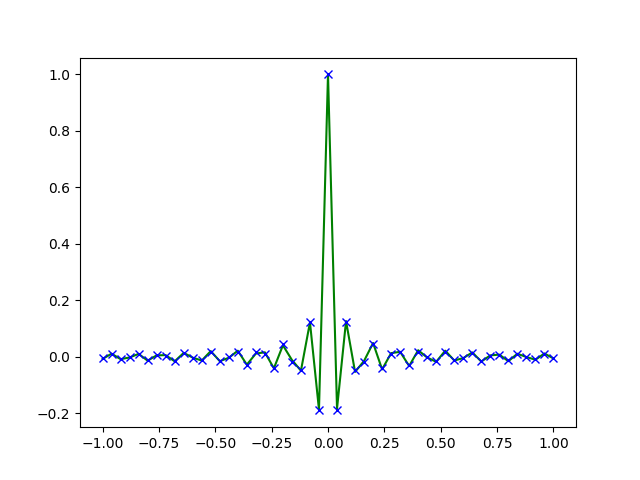}
\caption{ 
Solving~\eqref{IBVP2} with $L=100$. The green line corresponds to the approximate solution generated by a shallow ZeNN, with 100 neurons and $\alpha=0.1$. The blue crosses are sample points form the exact solution.    
}
\label{fig:ZeNN_Sinc}
\end{figure}

\subsection*{Acknowledgments}
The authors would like to thank M\'ario Figueiredo and Rui Rodrigues for interesting and stimulating conversations regarding the contents of this article.
This work was partially supported by FCT/Portugal through CAMGSD, IST-ID,
projects UIDB/04459/2020 and UIDP/04459/2020. 
J.L.C also acknowledges the support provided by FCT/Portugal and CERN through the project CERN/FIS-PAR/0023/2019. G. O. also acknowledges support by FCT 2021.02151.CEECIND.

\bibliographystyle{plainnat}
\bibliography{Refs}

\newpage
\appendix
\onecolumn

\section{Proof of Theorem \ref{thm:convergence}}
\label{s2}

We restate the theorem for convenience.
\begin{theorem}
    Let $\sigma$ be continuous and have at most polynomial growth of order $k \in \mathbb{N}$, i.e its satisfies $|\sigma(x)| \lesssim 1+|x|^k$. Further assume the measures $\widehat{\nu_j}$ to have uniformly bounded first $k$-moments. Then, for $\alpha>k+1$, the sequence of asymmetric maps $\widehat\Zc^N$ 
    \begin{equation}
\label{e1.12}
\begin{array}{rcl}
\widehat \Zc^N \, : \quad   \left(\R^4\right)^{\N} &\longrightarrow &  C(\R, \R)       \\
   \theta & \longmapsto & \Zc^N \circ P^{(N)} \, (\theta) =\,  f_{\theta}^N  
\end{array}
    \end{equation}
     converges uniformly in compact subsets of $\mathbb{R}$, and a.e. in $\left(\left(\R^4\right)^{\N}, \nu^\infty\right)$. In particular, for $\nu^\infty$-a.e. $\theta \in \left(\R^4\right)^{\N}$ and all $x \in \mathbb{R}$
    \begin{equation}
\widehat \Zc^\infty(\theta)(x)  = \lim_{N \to \infty} \widehat \Zc^N(\theta)(x) =
\sum_{j=1}^{\infty}  \frac{1}{j^\alpha}   \, \left(  W^{(2)}_{j} \ \sigma \left( j  \,  W^{(1)}_{j} x +  b^{(1)}_j \right) +  b^{(2)}_j \right) \, .
    \end{equation}
\end{theorem}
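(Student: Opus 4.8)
The statement is a uniform convergence result for a series of functions, so the natural tool is the Weierstrass $M$-test applied on an arbitrary compact set $[-R, R] \subset \mathbb{R}$, combined with a Borel--Cantelli / summability argument to handle the randomness of the parameters. The plan is to show that the general term of the series,
\[
u_j(\theta)(x) = \frac{1}{j^\alpha}\left( W^{(2)}_j \, \sigma\!\left(j W^{(1)}_j x + b^{(1)}_j\right) + b^{(2)}_j\right),
\]
is, for $\nu^\infty$-a.e.\ $\theta$, dominated on $[-R,R]$ by a summable deterministic (in $x$) sequence. First I would fix $R > 0$ and use the polynomial growth bound $|\sigma(t)| \lesssim 1 + |t|^k$ to estimate, for $|x| \le R$,
\[
\sup_{|x|\le R} |u_j(\theta)(x)| \lesssim \frac{1}{j^\alpha}\Big( |W^{(2)}_j|\big(1 + (jR)^k |W^{(1)}_j|^k + |b^{(1)}_j|^k\big) + |b^{(2)}_j| \Big) =: \frac{M_j(\theta)}{j^{\alpha - k}} \cdot (\text{bounded in }j),
\]
where $M_j(\theta)$ is a polynomial expression in the absolute values of the four parameters $\hat\theta_j$ of total degree $k+1$.

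\textbf{Handling the randomness.} The key point is that, because the $\widehat{\nu_j}$ have uniformly bounded first $k$-moments (this is the corrected hypothesis in the appendix restatement; with only first moments one needs $k=1$, or one interprets the bound as being on the relevant products), there is a constant $C_R$ independent of $j$ with $\mathbb{E}\,[\,\sup_{|x|\le R}|u_j(\theta)(x)|\,] \le C_R \, j^{-(\alpha-k)}$ after absorbing the factor $j^k$ from $(jR)^k$; more precisely the exponent works out so that the expected sup is $\lesssim j^{k - \alpha + 1} \cdot$ (something), and since $\alpha > k+1$ this exponent is $< -1$, hence summable. Wait --- let me be careful: $j^{-\alpha} \cdot j^k = j^{k-\alpha}$, and the remaining parameter factors contribute an $O(1)$ expectation bound, so $\sum_j \mathbb{E}[\sup_{|x|\le R}|u_j|] \lesssim \sum_j j^{k-\alpha} < \infty$ precisely when $\alpha > k+1$. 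By Tonelli/monotone convergence, $\mathbb{E}\big[\sum_j \sup_{|x|\le R}|u_j(\theta)(x)|\big] < \infty$, so the random series $\sum_j \sup_{|x|\le R}|u_j(\theta)(x)|$ converges for $\nu^\infty$-a.e.\ $\theta$. On that full-measure set the $M$-test applies with $M_j := \sup_{|x|\le R}|u_j(\theta)(x)|$, giving uniform (hence pointwise, and continuous-limit) convergence on $[-R,R]$.

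\textbf{Finishing.} To upgrade ``a.e.\ $\theta$, for each $R$'' to ``a.e.\ $\theta$, for all $R$'' I would take a countable exhausting sequence $R = 1, 2, 3, \dots$ and intersect the corresponding full-measure sets; the countable intersection still has full $\nu^\infty$-measure, and on it $\widehat\Zc^N(\theta)$ converges uniformly on every compact subset of $\mathbb{R}$, with the limit being continuous as a locally uniform limit of continuous functions, so the limit lies in $C(\mathbb{R},\mathbb{R})$ and is given by the stated infinite series. I expect the main obstacle to be purely bookkeeping: getting the moment hypothesis to match the degree-$(k+1)$ polynomial in the parameters that appears after bounding $\sigma$ --- one needs, e.g., $\mathbb{E}[|W^{(2)}_j|\,|W^{(1)}_j|^k]$ and $\mathbb{E}[|W^{(2)}_j|\,|b^{(1)}_j|^k]$ to be uniformly bounded, which by Hölder follows from uniformly bounded $(k+1)$-st moments (so the statement ``uniformly bounded first $k$-moments'' should be read in that spirit, or the independence of $W^{(2)}_j$ from $(W^{(1)}_j, b^{(1)}_j)$ within the tuple should be invoked to factor the expectation). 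Everything else is the standard $M$-test plus Borel--Cantelli-type routine.
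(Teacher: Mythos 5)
Your proposal is correct and rests on exactly the same key estimate as the paper: bounding $\sup_{x\in I}|u_j(\theta)(x)|$ via the polynomial growth of $\sigma$ and the moment hypothesis to get an expected contribution of order $j^{k-\alpha}$, summable precisely when $\alpha>k+1$. Where you differ is only in how you convert that estimate into almost-sure locally uniform convergence: you apply Tonelli to conclude $\sum_j\sup_{x\in I}|u_j(\theta)(x)|<\infty$ a.e.\ and then invoke the Weierstrass $M$-test, whereas the paper bounds $\mathbb{P}\bigl[\sup_{x\in I,\,N_1,N_2>N}|\Fc^{N_2}-\Fc^{N_1}|>\epsilon\bigr]$ by Chebyshev/Markov and then decomposes the divergence set as a countable union over $\epsilon_m\searrow 0$ of intersections of tail events. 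Your route is the more economical of the two (it gives absolute convergence of the series of sups directly and avoids the Cauchy-criterion bookkeeping), while the paper's route makes the quantitative rate $N^{-\delta}$ explicit. Your remark about the moment hypothesis is also on point: the paper's own computation factors $\mathbb{E}\bigl[|W^{(2)}_j|\,|W^{(1)}_j|^k\bigr]$ as $\mathbb{E}[|W^{(2)}_j|]\,\mathbb{E}[|W^{(1)}_j|^k]$, i.e.\ it tacitly uses independence of the components within each tuple $\hat\theta_j$ together with bounded $k$-th (not merely first) moments, exactly the two alternative fixes you identify.
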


\begin{proof}

 Let $I$ be compact, and $x \in I$, furthermore, let $A:=\max \ \{|x|^k : x \in I\}$.  We know that,

\[
\left|  \sigma \left( j  \,  W^{(1)}_{j} x +  b^{(1)}_j \right)\right| \leq C \left( j^k \left| W^{(1)}_{j} \right|^k |x|^k +  |b^{(1)}_j |^k\right) \leq C \left( j^k \left| W^{(1)}_{j} \right|^k A +  |b^{(1)}_j |^k\right),
\]

with $C \geq 1$. Fix $\epsilon>0$ and use the Chebychev inequality to estimate

\begin{align*}
    \mathbb{P} \left[ \sup_{\substack{x \in I\\ N_1,N_2>N}}| \Fc^{N_2}_A(\theta)(x) - \Fc^{N_1}_A(\theta)(x) | > \epsilon \right] & \leq \frac{1}{\epsilon} \mathbb{E} \left[ \sup_{\substack{x \in I\\ N_1,N_2>N}} \left| \sum_{j=N_1}^{N_2}  \frac{1}{j^\alpha}   \, \left(  W^{(2)}_{j} \ \sigma \left( j  \,  W^{(1)}_{j} x +  b^{(1)}_j \right) +  b^{(2)}_j \right) \right| \right] \\
		& \leq \frac{1}{\epsilon} \mathbb{E} \left[ \sup_{\substack{x \in I\\ N_1,N_2>N}}  \sum_{j=N_1}^{N_2}  \frac{1}{j^\alpha}   \, \left| W^{(2)}_{j}\right| \ \left|\sigma \left( j  \,  W^{(1)}_{j} x +  b^{(1)}_j \right) \right|+ \left| b^{(2)}_j \right| \right] \\
& \leq \frac{1}{\epsilon} \mathbb{E} \left[ \sup_{x \in I}  \sum_{j=N}^{\infty}  \frac{1}{j^\alpha}   \, \left| W^{(2)}_{j}\right| \ \left|\sigma \left( j  \,  W^{(1)}_{j} x +  b^{(1)}_j \right) \right|+ \left| b^{(2)}_j \right| \right] \\
	& \leq \frac{C}{\epsilon} \mathbb{E} \left[ \sup_x  \sum_{j=N}^{\infty}  \frac{1}{j^\alpha}   \, \left| W^{(2)}_{j}\right| \left(j^k \left| W^{(1)}_{j} \right|^k A +  |b^{(1)}_j |^k\right) + \left| b^{(2)}_j \right| \right] \\
    & \leq \frac{C}{\epsilon} \ \sum_{j=N}^{\infty}  \frac{1}{j^\alpha}   \, \left(   \mathbb{E} [| W^{(2)}_{j}|] \  \mathbb{E} \left[ j^k \left| W^{(1)}_{j} \right|^k A +  |b^{(1)}_j |^k \right] +  \mathbb{E} [ |b^{(2)}_j |] \right)  \\
    & \leq \frac{C}{\epsilon} \ \sum_{j=N}^{\infty}  \frac{1}{j^\alpha}   \, \left(   \mathbb{E} [| W^{(2)}_{j}|] \   \left( j^k \mathbb{E} [ | W^{(1)}_{j} |^k ] A +  \mathbb{E}[ |b^{(1)}_j |^k ] \right) +  \mathbb{E} [ |b^{(2)}_j |] \right) \\
    & \leq \frac{C}{\epsilon} \ \sum_{j=N}^{\infty}  \left( \frac{A_k}{j^{\alpha-k}}  + \frac{B_k}{j^\alpha}  \right) ,
\end{align*} 
where
$$A_k = \mathbb{E} [| W^{(2)}_{j}|] \mathbb{E} [ | W^{(1)}_{j} |^k ]A , \ \ B_k =\mathbb{E} [| W^{(2)}_{j}|]  \mathbb{E}[ |b^{(1)}_j |^k ] + \mathbb{E} [ |b^{(2)}_j |], $$
are both independent of $j$. Hence, for any $N_2\geq N_1>N$, and under the assumption that $\alpha>k+1$ we find that 
\begin{align*}
    \mathbb{P} \left[ \sup_{\substack{x \in I\\ N_1,N_2>N}} | \Fc^{N_2}_A(\theta)(x) - \Fc^{N_1}_A(\theta)(x) | > \epsilon \right] & \leq \frac{C_k}{\epsilon N^{\delta}},
\end{align*} 
for some positive $\delta<\alpha-k-1$ and $C_k$.

 We can characterize the set of $\theta \in (\mathbb{R}^4)^{\mathbb{N}}$ for which $\lbrace \Fc^{N}_A(\theta) (x)\rbrace_{N \in \mathbb{N}}$ does not converge as
\begin{align*}
    \Omega & := \bigcup_{\epsilon>0} \Bigg\lbrace \theta \in (\mathbb{R}^4)^{\mathbb{N}} \ \Big| \ \forall_{N>0} \ \exists_{N_2,N_1>N}  , \ \text{such that} \ \sup_{x \in I} |\Fc^{N_2}_A(\theta)(x) - \Fc^{N_1}_A(\theta)(x)| > \epsilon \Bigg\rbrace\\
    &= \bigcup_{\epsilon>0} \Bigg\lbrace \theta \in (\mathbb{R}^4)^{\mathbb{N}} \ \Big| \ \forall_{N>0} , \ \sup_{\substack{x \in I\\ N_1,N_2>N}} |\Fc^{N_2}_A(\theta)(x) - \Fc^{N_1}_A(\theta)(x)| > \epsilon \Bigg\rbrace\\
&= \bigcup_{m  \geq 0} \Bigg\lbrace \theta \in (\mathbb{R}^4)^{\mathbb{N}} \ \Big| \ \forall_{N>0} , \ \sup_{\substack{x \in I\\ N_1,N_2>N}} |\Fc^{N_2}_A(\theta)(x) - \Fc^{N_1}_A(\theta)(x)| > \epsilon_m \Bigg\rbrace,
\end{align*}
 where $\{\epsilon_m\}_{m \in \N}$ is a strictly decreasing sequence with $\epsilon_m \searrow 0$.

We can then say, with 
\[
A^{(m)}=\Bigg\lbrace \theta \in (\mathbb{R}^4)^{\mathbb{N}} \ \Big| \ \forall_{N>0} , \ \sup_{\substack{x \in I\\ N_1,N_2>N}} |\Fc^{N_2}_A(\theta)(x) - \Fc^{N_1}_A(\theta)(x)| > \epsilon_m \Bigg\rbrace,
\]
that 
\[ 
\Omega= \bigcup_{m \geq 0} A^{(m)}.
\]
At this point it is convenient to define
\[
A^{(m)}_N=\Bigg\lbrace \theta \in (\mathbb{R}^4)^{\mathbb{N}} \ \Big|  \sup_{\substack{x \in I\\ N_1,N_2>N}} |\Fc^{N_2}_A(\theta)(x) - \Fc^{N_1}_A(\theta)(x)| > \epsilon_m \Bigg\rbrace,
\]
then 
\[
A^{(m)}=\bigcap_{N\geq 1} A^{(m)}_N  \quad \text{ and } \quad \mathbb{P} \left( A^{(m)} \right) \leq \mathbb{P} \left( A^{(m)}_N \right),
\]
for any $N \in \N$. 
From our previous computation, we find that, for all $N \gg 1$ independently of $m$, we have
$$\mathbb{P} \left( A^{(m)}_N\right) \leq \frac{C_k}{\epsilon_m N^{\delta}}.$$
Hence, it follows from the fact that $\mathbb{P} \left( A^{(m)} \right) \leq \mathbb{P} \left( A^{(m)}_N \right)$ for all $N$ that
$$\mathbb{P} \left( A^{(m)} \right) \leq \lim_{N \to + \infty} \mathbb{P} \left( A^{(m)}_N \right) =0.$$
Therefore,
\[
\mathbb{P} \left(\Omega\right) \leq \sum_{m \geq 0} \mathbb{P} \left(A^{(m)}\right)=0. 
\]

\end{proof}

\section{Cumulants and characterisitcs of ZeNN's}\label{app:Cumulants}

The main goal of this Appendix is to establish the proof of Theorem \ref{thm:Cumulants}. We also take the opportunity to compute the characteristic function of a simple ZeNN that gives an explicit example of non-Gaussian behavior in the infinite width limit.

\subsection{Proof of Theorem \ref{thm:Cumulants}}
Recall that a ZeNN with $N \in \mathbb{N}$ neurons and one hidden layer determines the function
$$f_{\theta}^N(x) = \sum_{j=1}^N \frac{1}{j^\alpha} p_{\hat \theta_j}(jx)   ,$$
where $\hat \theta=(W^{(2)},b^{(2)}, W^{(1)},b^{(1)})$ and
$$p_{\hat \theta}(x)  = W^{(2)}\sigma(W^{(1)} x + b^{(1)}) + b^{(2)}, $$ 
is a standard perceptron. In Theorem \ref{thm:Cumulants} one considers all $\hat \theta_j \sim \hat \theta$ to be iid random variables, and for fixed $x \in \mathbb{R}$, regards both $p_{\hat \theta}(x)$ and $f_{\theta}^N(x)$ as a random variables whose $r$-cumulants are respectively denoted by $\lambda^{(r)}(x)$ and $\lambda^{(r)}_N(x)$. Then, Theorem \ref{thm:Cumulants} can be stated as follows.

\begin{theorem}
Let $r \in \mathbb{N}$. In the setup previously described
    $$\lambda^{(r)}_N(x) = \sum_{j=1}^N \frac{\lambda^{(r)}(jx)}{j^{r\alpha}}.$$
Furthermore, if $\sigma$ satisfies $|\sigma(x)| \lesssim 1+|x|^k$, and the moments of $\hat \theta$ are bounded. Then, for $\alpha>k+1$ the ZeNN converges pointwise a.e. and the limiting cumulants are finite and satisfy
    $$\lim_{N \to + \infty} \lambda^{(r)}_N(x) = \sum_{j=1}^{+\infty} \frac{\lambda^{(r)}(jx)}{j^{r\alpha}}.$$
\end{theorem}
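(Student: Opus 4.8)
The plan is to leverage the two algebraic properties of cumulants on which the stated formula rests: additivity over independent summands, $\kappa_r(Z_1+\cdots+Z_m)=\sum_i\kappa_r(Z_i)$ for independent $Z_i$, and homogeneity of degree $r$ under scaling, $\kappa_r(cZ)=c^r\kappa_r(Z)$; both are immediate from the additivity and the scaling behaviour of cumulant generating functions.

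First I would write $f_\theta^N(x)=\sum_{j=1}^N j^{-\alpha}Y_j$ with $Y_j:=p_{\hat\theta_j}(jx)$. Since the tuples $\hat\theta_j$ are i.i.d., the $Y_j$ are independent and each $Y_j$ has the law of $p_{\hat\theta}(jx)$, so its $r$-th cumulant is $\lambda^{(r)}(jx)$; the bound $|\sigma(u)|\lesssim 1+|u|^k$ together with the moment hypothesis on $\hat\theta$ ensures $Y_j\in L^r$, so every cumulant below is finite and the manipulations are legitimate. Applying homogeneity to each term and then additivity yields $\lambda_N^{(r)}(x)=\sum_{j=1}^N\kappa_r(j^{-\alpha}Y_j)=\sum_{j=1}^N j^{-r\alpha}\lambda^{(r)}(jx)$, which is the first assertion.

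For the limit it then remains only to show that $\sum_j j^{-r\alpha}\lambda^{(r)}(jx)$ converges absolutely, and for that I need a growth estimate on $y\mapsto\lambda^{(r)}(y)$. Here I would invoke the classical fact that the $r$-th cumulant is an isobaric polynomial of weight $r$ in the moments $m_s(y):=\mathbb{E}[p_{\hat\theta}(y)^s]$, $1\le s\le r$, i.e.\ a linear combination of monomials $\prod_s m_s(y)^{a_s}$ with $\sum_s s\,a_s=r$ (partitions of $\{1,\dots,r\}$). From $|\sigma(u)|\lesssim 1+|u|^k$ and boundedness of the moments of $(W^{(1)},b^{(1)},W^{(2)},b^{(2)})$ one gets $|m_s(y)|\lesssim 1+|y|^{ks}$, hence each such monomial is $\lesssim 1+|y|^{k\sum_s s a_s}=1+|y|^{kr}$, so $|\lambda^{(r)}(y)|\lesssim 1+|y|^{kr}$. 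Therefore $j^{-r\alpha}|\lambda^{(r)}(jx)|\lesssim j^{-r(\alpha-k)}$ up to a constant depending on $x$ and $r$, and $\alpha>k+1$ gives $r(\alpha-k)>r\ge 1$, so the series converges absolutely. Letting $N\to\infty$ in the formula of the previous paragraph gives the claimed identity with a finite right-hand side; note also that $\alpha>k+1$ places us in the hypotheses of Theorem~\ref{thm:convergence}, so the ZeNN does converge pointwise a.e.

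The main obstacle — indeed the only point that is not bookkeeping — is the sharp growth exponent $kr$ for $\lambda^{(r)}$: it is exactly this that makes the series summable precisely under $\alpha>k+1$ (rather than a stronger hypothesis), and it relies on the weight-$r$ homogeneity of the moment-to-cumulant polynomial, not on a crude term-by-term bound. As a complement, not needed for the statement as phrased but worth recording, one can upgrade the a.e.\ convergence of $f_\theta^N(x)$ to $L^r$ convergence: by Minkowski's inequality $\|f_\theta^N(x)-f_\theta^M(x)\|_{L^r}\le\sum_{j>M}j^{-\alpha}\|p_{\hat\theta}(jx)\|_{L^r}\lesssim\sum_{j>M}j^{-(\alpha-k)}\to 0$ as $M\to\infty$, so $(f_\theta^N(x))_N$ is Cauchy in $L^r$, which identifies $\lim_N\lambda_N^{(r)}(x)$ with the genuine $r$-th cumulant of the limiting random variable $f_\theta^\infty(x)$.
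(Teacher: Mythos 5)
Your proposal is correct and follows essentially the same route as the paper: the identity $\lambda^{(r)}_N(x)=\sum_{j=1}^N j^{-r\alpha}\lambda^{(r)}(jx)$ is obtained in the paper by factorizing the characteristic function over the independent neurons and differentiating the resulting cumulant generating function $F_N(x,t)=\sum_j F_1(jx,t/j^{\alpha})$, which is exactly the additivity-plus-degree-$r$-homogeneity argument you quote; and the convergence step in the paper likewise bounds $\lambda^{(r)}$ by the partition (isobaric) polynomial in the moments, establishes $|m^{(s)}(y)|\lesssim(1+|y|)^{ks}$ from the polynomial growth of $\sigma$ and the bounded moments of $\hat\theta$, and concludes $|\lambda^{(r)}(y)|\lesssim(1+|y|)^{kr}$ so that the series converges for $r(\alpha-k)>1$. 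Your closing remark on $L^r$-Cauchyness via Minkowski, identifying the limiting cumulants with those of $f^{\infty}_{\theta}(x)$, is a worthwhile addition not present in the paper.
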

\begin{proof}
The characteristic function of $f_{\theta}^N(x)$ is defined by
$$t \mapsto Z_N(x,t)= \mathbb{E}_{\theta_j} \left[ \exp \left( i t f_{\theta}^N(x) \right) \right],$$
and can be computed as follows
\begin{align*}
    Z_N(x,t) & = \mathbb{E}_{\theta} \left[ \exp \left( i t f_{\theta}^N(x) \right) \right] \\
    & = \mathbb{E}_{\theta} \left[ \exp \left( i \sum_{j=1}^N \frac{t}{j^\alpha} p_{\hat \theta_j} (jx) \right) \right] \\
    & = \prod_{j=1}^N \mathbb{E}_{\hat \theta} \left[ \exp \left( i \frac{t}{j^\alpha} p_{\hat \theta}(jx) \right) \right] \\
     & = \prod_{j=1}^N Z_1 \left( jx ,  \frac{t}{j^\alpha} \right).
\end{align*}
Then, the resulting cumulant generating function is given by
$$F_N(x,t) = \ln Z_N(x,t) = \sum_{j=1}^N F_1 \left( jx ,  \frac{t}{j^\alpha} \right) .$$
In particular, taking derivatives with respect to $t$, and using the chain rule, we find that
\begin{align*}
    \frac{d^r}{dt^r} F_N(x,t) & = \sum_{j=1}^N \frac{1}{j^{r\alpha}} \left( \frac{d^rF_1}{dt^r} \right) \left( jx ,  \frac{t}{j^\alpha} \right),
\end{align*}
and we can compute the $k$-th cumulant of $f_{\theta}^N(x)$ by evaluating this at $t=0$ yielding 
 $$\lambda^{(r)}_N(x) = \sum_{j=1}^N \frac{\lambda^{(r)}(jx)}{j^{r\alpha}},$$
which is the formula in the statement.

We must now turn to the problem of understanding the convergence of the cumulants $\lambda^{(r)}_N(x)$ when $N \to ´+ \infty$ which is equivalent to establishing the convergence of the series 
$$\sum_{j=1}^{+\infty} \frac{\lambda^{(r)}(jx)}{j^{r\alpha}}.$$
For this it is convenient to notice that
$$|\lambda^{(r)}(x)| \lesssim \sum_{\ell \geq 0} \sum_{r_1+ \ldots + r_\ell =r} \prod_{i=1}^\ell |m^{(r_i)}(x)|, $$
where 
$$m^{(r)}(x)= \mathbb{E}_{\hat \theta} \left[ p_{\hat \theta} (x)^r \right],$$
denotes the $r$-th moment of $p_{\hat \theta}(x)$. In particular, under the hypothesis that 
$$|\sigma(x)| \lesssim (1+|x|^k) \lesssim (1+|x|)^k$$ 
we find that 
\begin{align*}
    |m^{(r)}(x)| & \leq \mathbb{E}_{\hat \theta} \left[ | p_{\hat \theta} (x)|^r \right] \\
    & \lesssim  \mathbb{E}_{\hat \theta} \left[ 1+ | W^{(2)}|^r (1 + |W^{(1)} x + b^{(1)}|)|^{rk} + |b^{(2)} |^r \right] \\
    & \lesssim \mathbb{E}_{\hat \theta} \left[ 1+  | W^{(2)}|^r +  |W^{(2)}|^r |W^{(1)}|^{rk} |x|^{rk} +  |W^{(2)}|^r |b^{(1)}|^{rk} + |b^{(2)} |^r \right].
\end{align*}
Recall that we are also assuming that the moments of $\hat{\theta}$, $\mathbb{E}_{\hat \theta}[|\theta|^p] = \mu_p$, are bounded. Furthermore applying Hölder inequality we obtain the well known relation
\[
\mathbb{E}(|x|) = \mathbb{E}(|x| \cdot 1) \leq \mathbb{E}(|x|^p)^{1/p} \mathbb{E}(1^q)^{1/q} = \mathbb{E}(|x|^p)^{1/p},
\]
for $p>1$. Thus, with $q = k$, we get
\[
\mu_k = \mathbb{E}_{\hat \theta}[|\theta|^k] \leq \mathbb{E}_{\hat \theta}[|\theta|^{rk}]^{1/k} = \mu_{rk}^{1/k}.
\]
We know that 

\begin{align*}
|m^r(x)| \leq& \mathbb{E} \left( 1+ |W^{(2)}|^r + |W^{(2)}|^r |W^{(1)}|^{rk}   |x|^{rk} + |W^{(2)}|^r |b^{(1)}|^{rk}+ |b^{(2)}|^{r} \right)\\
\leq& 1 + \mu_r + \mu_r\mu_{rk}  |x|^{rk}+ \mu_r\mu_{rk}+\mu_r\\
\leq& 1 + 2 \mu_r + \mu_r\mu_{rk}(1 + |x|^{rk})\\
\leq& 1 + 2 \mu_{rk}^{1/k} + \mu_{rk}^{1/k+1} (1 + |x|^{rk})\\
\leq& (1 + |x|^{rk}) \left( 1 + 2 \mu_{rk}^{1/k} + \mu_{rk}^{1 + 1/k} \right)\\
\leq& (1 + |x|^{rk}) \left( 2 + 2 \mu_{rk}^{1/k} + 2 \mu_{rk}^{1 + 1/k} \right)\\
=& 2 \left( 1 + |x|^{rk} \right) \left( 1 + \mu_{rk}^{1/k} \left( 1 + \mu_{rk} \right) \right)\\
\leq& 2 \left( 1 + |x|^{rk} \right) \left( 1 + \left( 1 + \mu_{rk} \right)^{1 + 1/k} \right)\\
\leq&4 \left( 1 + |x|^{rk} \right)  \left( 1 + \mu_{rk} \right)^{1 + 1/k} \\
\leq&4 \left( 1 + |x| \right)^{rk}  \left( 1 + \mu_{rk} \right)^{1 + 1/k}.
\end{align*}

Inserting this into the previous bound for $\lambda^{(r)}(x)$ and defining
\[
 M_R = \max \left\{ \prod_{i=1}^\ell (1+\mu_{r_ik})^{1+1/k} : r_1+ \dots, r_\ell = r \text{ and } \ell \in \{1,\ldots, n\}  \right\},
\]
we obtain the following estimate
\begin{align*}
    \lambda^{r}(x) = &\sum_{\ell \geq 1} \sum_{r_1, \dots, r_\ell = r} \prod_{i=1}^\ell |m^{(r_i)}(x)| \\
    \leq &\sum_{\ell \geq 1} \sum_{r_1, \dots, r_\ell = r} \prod_{i=1}^\ell  (1+|x|)^{r_ik} (1+\mu_{r_ik})^{1+1/k} \\
	= &\sum_{\ell \geq 1} \sum_{r_1, \dots, r_\ell = r}   (1+|x|)^{rk} \prod_{i=1}^\ell (1+\mu_{r_ik})^{1+1/k} \\
		\leq  &(1+|x|)^{rk}\sum_{\ell \geq 1} \sum_{r_1, \dots, r_\ell = r}    M_r \\
	=&C_r (1+|x|)^{rk}
\end{align*}

Then, the terms inside the sum satisfies
\begin{align*}
    \Big\vert \frac{\lambda^{(r)}(jx)}{j^{r\alpha}} \Big\vert & \lesssim \frac{ \left( 1+|jx| \right)^{rk}}{j^{r\alpha}} .
\end{align*}
For $x=0$ and $r\alpha>1$ the series converges and so we now focus in the case when $x \neq 0$. Then, for large $j$ we may assume that $|jx|>1$ and therefore 
\begin{align*}
    \Big\vert \frac{\lambda^{(r)}(jx)}{j^{r\alpha}} \Big\vert  \lesssim  \frac{  |jx|^{rk} }{j^{r\alpha}}  =  \frac{|x|^{rk}}{j^{r(\alpha-k)}} ,
\end{align*}
and therefore the series converges for $r(\alpha-k)>1$ which is the case if $\alpha>k+1$.
\end{proof}

\begin{remark}
    The proof of the previous proposition contains two interesting formulas computing the moment and cumulant generating function of $f_{\theta}^N(x)$ in terms of those of $p_{\hat \theta}(x)$.
\end{remark}

\begin{remark}
    For $x=0$ we have
    $$\lambda^{(r)}_N(0) = \sum_{j=1}^N \frac{\lambda^{(r)}(0)}{j^{r\alpha}} = \lambda^{(r)}(0) \ \sum_{j=1}^N \frac{1}{j^{r\alpha}},$$
    and under the assumption that $r\alpha>1$ we can take the limit and obtain
    $$\lim_{N \to + \infty} \lambda^{(r)}_N(0) =  \lambda^{(r)}(0) \ \sum_{j=1}^{+\infty} \frac{1}{j^{r\alpha}}.$$
    In particular, it does not vanish if $\lambda^{(r)}(0) \neq 0$. We therefore find that $f_{\theta}^N(0)$ does not converge to a Gaussian distribution if any of the cumulants of order at least two of $p_{\hat \theta}(0)$ is nonzero.

    In fact, we further find that little information is lost when passing to the limit $N \to +\infty$. Indeed, suppose $\hat \alpha$ are $\hat \beta$ two sets of parameters drawn from two different distributions such that at least one of the cumulants of $p_{\hat \alpha}$ and $p_{\hat \beta}$ are different. Then, the $N \to +\infty$ limit of two ZeNN's with $\hat \theta_i \sim \hat \alpha $  and $\hat \theta_i \sim \hat \beta$ follow different distributions.
\end{remark}

\begin{remark}\label{rem:Cumulants of Standard NN}
    It is interesting to compare this result with the case for standard NN given by the formula $\frac{1}{\sqrt{N}} \sum_{j=1}^N p_{\hat \theta_j}(x)$. Then, we find that the corresponding characteristic function is
    \begin{align*}
        Z_N(x,t) & = \mathbb{E}_{\theta} \left[ \exp \left( i \frac{t}{\sqrt{N}} \sum_{j=1}^N p_{\hat \theta_j}(x) \right) \right] \\
        & = \prod_{j=1}^N \mathbb{E}_{\hat \theta} \left[ \exp \left( i \frac{t}{\sqrt{N}} \sum_{j=1}^N p_{\hat \theta}(x) \right) \right] \\
        & = \left( Z_1 \left( x ,  \frac{t}{\sqrt{N}} \right) \right)^N,
    \end{align*}
     and the cumulant generating function by
     \begin{align*}
         Z_N(x,t) & = N \ln Z_1 \left( x ,  \frac{t}{\sqrt{N}} \right) \\
         & = N F_1 \left( x ,  \frac{t}{\sqrt{N}} \right) \\
         & = N \left(  \sum_{r=1}^n \frac{\lambda_r(x)}{r!} \left( \frac{t}{N^{1/2}} \right)^r + o \left( \frac{t^n}{N^{n/2}} \right)\right).
     \end{align*}
     Under the assumption that $p_{\hat \theta}(x)$ has zero average, i.e. $\lambda_1(x)=0$, we find that
     \begin{align}
         Z_N(x,t) & =   \sum_{k=2}^n \frac{\lambda^{(k)}(x)}{k!}  \frac{t^k}{N^{k/2-1}}  + o \left( \frac{t^n}{N^{n/2}} \right),
     \end{align}
     and therefore
     $$\lambda^{(k)}_N = \frac{\lambda^{(k)}(x)}{N^{k/2-1}}. $$
     In particular, we see that for $k>2$ the limiting cumulant $\lim_{N \to + \infty}\lambda^{(k)}_N=0$ vanish as we were expecting because of the central limit theorem.
\end{remark}

\subsection{An example of a ZeNN characteristic function and non-Gaussian behavior}

Let $\sigma$ be the relu activation, and $W$ and $b$ be independent uniform random variables with densities $f_W = \frac{1}{2L}1_{[-L,L]}$ and $f_b = \frac{1}{2B}1_{[-B,B]}$. Let also $x$ be a positive real number and $j$ a positive integer. Then 
\begin{align*}
\mathbb{E}_{w\sim f_W, b\sim f_b}\left[ e^{it \sigma(jWx+b)}\right] 
&=
\int_{-\infty}^{+\infty}\int_{-\infty}^{+\infty} e^{it \sigma(jwx+b)} f_b(b)db f_w(w) dw
\\
&=
\frac{1}{2L} \int_{-L}^{L}
\left[
\int_{-jwx}^{+\infty} e^{it(jwx+b)} f_b(b) db +
\int_{-\infty}^{-jwx} f_b(b) db
\right]dw 
\\
&=
\frac{1}{4LB} \int_{-L}^{L}
\left[
\int_{\max\{-B,-jwx\}}^{\max\{B,-jwx\}} e^{it(jwx+b)} db +
\int_{\min\{-B,-jwx\}}^{\min\{B,-jwx\}}  db
\right]dw 
\\
&=: I + II \;.
\end{align*}

Set $m_1=\min\{-B,-jwx\}$ and $m_2=\min\{B,-jwx\}$, then 
\begin{align*}
II = \frac{1}{4LB} \int_{-L}^{L} \int_{m_1}^{m_2} db dw 
= 
\frac{1}{4LB} \int_{-L}^{L} (m_2-m_1)(w)dw\,. 
\end{align*}
By considering the change of variables $y=jwx$ we have
\begin{equation*}
(m_2-m_1) (y)
=
\left\{
\begin{array}{l}
2B \; , \; y<-B \\ 
-y+B \;, \; -B \leq y \leq B \\
0 \;, \; y>B \;
\end{array}
\right.
\end{equation*}
and, since $x\neq 0$, 
\begin{align*}
II = \frac{1}{4LB jx} \int_{-jxL}^{jxL} (m_2-m_1)(y)dy\,. 
\end{align*}
If $jLx\leq B$, we get 
\begin{align*}
II = \frac{1}{4LB jx} \int_{-jxL}^{jxL} -y+B\,dy =\frac{1}{2}\, 
\end{align*}
and if $jLx>B$ we also get 
\begin{equation*}
II = \frac{1}{4LB jx} \int_{-B}^{B} -y+B\,dy + \frac{|B-jxL|\,2B}{4LBjx}
= \frac{B+(jxL-B)}{2Ljx}=\frac{1}{2}\;. 
\end{equation*}

Now set $M_1=\max\{-B,-jwx\}$ and $M_2=\max\{B,-jwx\}$, so that 
\begin{align*}
I = \frac{1}{4LB} \int_{-L}^{L} \int_{M_1}^{M_2} e^{it(jwx+b)} db dw 
= 
\frac{1}{4LB i t} \int_{-L}^{L} e^{itjwx} \left(e^{itM_2}-e^{itM_1}\right) dw \;,  
\end{align*}
In the previous $y$ variable, the integrand function  of the last expression reads 
\begin{equation*}
g(y)
=
\left\{
\begin{array}{l}
2ie^{ity} \sin Bt  \; , \; y<-B \\ 
e^{it(y+B)}-1 \;, \; -B \leq y \leq B \\
0 \;, \; y>B \;, 
\end{array}
\right.
\end{equation*}
and
\begin{align*}
I = \frac{1}{4LB it jx} \int_{-jxL}^{jxL} g(y)dy\,. 
\end{align*}
If $jLx < B$ (recall $x>0$) we get 
\begin{align*}
I = \frac{1}{4LBitjx} \int_{-jxL}^{jxL} e^{it(y+B)}-1 dy
= 
\frac{1}{2B i t} \left( e^{itB}\frac{\sin(jxLt)}{jxLt}-1\right)\;,  
\end{align*}
and if $jLx \geq B$ we have 
\begin{align*}
I &= \frac{1}{4LBitjx} \int_{-B}^{B} e^{it(y+B)}-1 dy +  \frac{1}{4LBitjx} \int_{B}^{jxL} 2ie^{ity} \sin Bt dy
\\
&= 
\frac{1}{2i t j x L} \left( e^{itjxL}\frac{\sin(Bt)}{Bt}-1\right)\;.  
\end{align*}

Consider now that we have a sequences of independent random variables $w_j\sim f_w$ and $b_j\sim f_b$. Then  

\begin{align*}
\mathbb{E}_{w_j\sim f_W, b_j\sim f_b}\left[ e^{it \sum_{j=1}^N \sigma(jW_jx+b_j)/j^{\alpha}}\right] 
&=
\prod_{j=1}^N \mathbb{E}_{w\sim f_W, b\sim f_b}\left[ e^{it/j^{\alpha}  \sigma(jWx+b)}\right] 
\\
&=
\prod_{j=1}^{\lfloor{B/xL}\rfloor} 
\left[ \frac{1}{2B i t/j^{\alpha}} \left( e^{iBt/j^{\alpha}}\frac{\sin(xLt/j^{\alpha-1})}{xLt/j^{\alpha-1}}-1\right) +\frac12 \right]
\\
&\quad \times \prod_{j=\lceil{B/xL}\rceil}^N \left[\frac{1}{2ixL t/j^{\alpha-1} } \left( e^{ixLt/j^{\alpha-1}}\frac{\sin(Bt/j^{\alpha})}{Bt/j^{\alpha}}-1\right) +\frac12 \right]\;.
\end{align*}

Since the characteristic function of a Gaussian is a Gaussian, this provides an example of non-Gaussian behavior even in the infinite width limit, when $N\rightarrow+\infty$.

\section{Training and feature learning}
\label{app:Training}


Consider the loss function
$$\mathcal{L}^N(\theta) = \frac{1}{2} \sum_{\mu} \left( \Zc^N(\theta)(X_\mu) - Y_\mu   \right)^2 ,$$
and we consider $\lbrace \theta(t) \rbrace_{t \geq 0}$ learning by gradient descent, i.e.
$$\dot{\theta}(t)=-\nabla \mathcal{L}^N ( \theta(t) ) . $$
Then, we find that
\begin{align} \label{label_evolution}
    \partial_t \Zc^N(\theta(t))(x) = - \sum_{\mu} K_\theta^N (x,X_\mu) ( \Zc^N(\theta)(X_\mu) - Y_\mu),
\end{align}
where 
\begin{align}\label{eq:NTK}
K_\theta^N (x,y) & = xy \sum_{j=1}^N \frac{\left( W^{(2)}_j \right)^2}{j^{2\alpha-2}} \dot{\sigma} (j W_1^{(j)}x+b^{(1)}_j) \dot{\sigma} (j W_1^{(j)}y+b^{(1)}_j) \\ \nonumber
& \ \ \ \ + \sum_{j=1}^N \frac{1}{j^{2\alpha}} \left[ 1 + \sigma (j W_1^{(j)}x+b^{(1)}_j) \sigma (j W_1^{(j)}y+b^{(1)}_j) + \left( W^{(2)}_j \right)^2 \dot{\sigma} (j W_1^{(j)}x+b^{(1)}_j) \dot{\sigma} (j W_1^{(j)}y+b^{(1)}_j)  \right] 
\end{align}
will be referred to as the Zeta neural tangent kernel (ZeNTK).

From \eqref{label_evolution} with \( E_\mu(t) = \Zc^N(\theta(t))(x_\mu) - y_\mu \) and \( E^\intercal = (E_\mu)_{\mu \in [n]} \), it can be deduced that

$$
\dot{E} = K_\theta^N E
$$
and $ E^{\top} {\dot{E}} = E^{\top} K_\theta^N E $, 

that is \( \dot{L} = E^{\top} K_\theta^N E \), (since \( L = \frac{1}{2} E^{\top} E \) and hence 
\(\dot{L} = E^{\top} \dot{E} \)). We know that
\[
 K_\theta^N = \bigg[K_\theta^N(x_\mu, x_\nu)\bigg]_{\mu, \nu \in [n]}
 \]
 is symmetric so \( K_\theta^N = U^{\top} \Lambda U \) where \( \Lambda \) is diagonal and \( U \) orthogonal matrices.

$$
\dot{L} 
= - E^{\top} 
U^{\top}
 \Lambda U E \leq -\lambda_1(t) E^{\top} U^{\top} U E = -\lambda_1 E^{\top} E = -2\lambda_1(t) L
$$
where \( \lambda_1(t) \) is the smallest eigenvalue of \( \Theta_{\theta_(t)}^N \) and from Grönwall's inequality we conclude
\[
L(t) \leq e^{-2\int_0^t \lambda_1(\tau) d\tau} L(0).
\]
We know that the ZeNTK \( K_{\theta_(\tau)}^N \)  is positive semi-definite and thus \( \lambda_1(\tau) \geq 0 \), for any \( \tau \).

An argument similar to that of the proof of Theorem \ref{thm:convergence} shows the following.

\begin{proposition}
    Let $\sigma$ and $\dot{\sigma}$ be continuous with at most polynomial growth of order $k$. If the measures $\widehat{\nu_j}$ have uniformly bounded first moments and $\alpha>k+3/2$, then the sequence 
    \begin{equation}
\label{e1.12}
\begin{array}{rcl}
K^N \, : \quad   \left(\R^4\right)^{\N} &\longrightarrow &  C(\R \times \R, \mathbb{R} )       \\
   \theta & \longmapsto & K_\theta^N 
\end{array}
    \end{equation}
     converges uniformly on compact subsets of $\R \times \R
      $ and a.e. in $\left(\left(\R^4\right)^{\N}, \nu^\infty\right)$, i.e.
     \begin{equation}
     \label{e63}
     \exists \lim_{N \to \infty} \, K_\theta^N  =:
     K_\theta^\infty \, . 
     \end{equation}
     
     \end{proposition}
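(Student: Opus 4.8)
The plan is to mimic the proof of Theorem~\ref{thm:convergence}, now applied to the two-variable kernel $K_\theta^N(x,y)$ regarded as a sequence of random elements of $C(\R\times\R,\R)$ indexed by $N$, the partial sums being over the $j$-blocks defining the ZeNTK in~\eqref{eq:NTK}. First I would fix a compact set $I\times I\subset\R\times\R$, set $A:=\max\{|x|^p:x\in I\}$ for the relevant powers $p$, and bound the tail $\sum_{j=N}^\infty(\cdots)$ of~\eqref{eq:NTK} uniformly in $(x,y)\in I\times I$. Using the polynomial growth hypotheses $|\sigma(\cdot)|\lesssim 1+|\cdot|^k$ and $|\dot\sigma(\cdot)|\lesssim 1+|\cdot|^k$, each term of the first sum is controlled by $\frac{(W_j^{(2)})^2}{j^{2\alpha-2}}$ times a factor $\lesssim(1+j^k|W_1^{(j)}||x|+|b_j^{(1)}|^k)(1+j^k|W_1^{(j)}||y|+|b_j^{(1)}|^k)$, which after expanding produces a leading power $j^{2k}$; multiplied by the prefactor $j^{-(2\alpha-2)}$ this gives a term of order $j^{-(2\alpha-2-2k)}$. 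Similarly the second sum has prefactor $j^{-2\alpha}$ and its worst term (the $\sigma\sigma$ or $(W^{(2)})^2\dot\sigma\dot\sigma$ piece) scales like $j^{2k}\cdot j^{-2\alpha}=j^{-(2\alpha-2k)}$. The dominant decay rate is therefore $j^{-(2\alpha-2-2k)}$, and summability requires $2\alpha-2-2k>1$, i.e.\ $\alpha>k+3/2$, exactly the hypothesis.

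Next I would run the Chebyshev/first-moment argument verbatim as in the appendix proof of Theorem~\ref{thm:convergence}: estimate
$$\Pb\!\left[\sup_{\substack{(x,y)\in I\times I\\ N_1,N_2>N}}\bigl|K^{N_2}_\theta(x,y)-K^{N_1}_\theta(x,y)\bigr|>\e\right]\le\frac1\e\,\E\!\left[\sup_{(x,y)\in I\times I}\sum_{j=N}^\infty(\text{tail terms})\right],$$
pull the supremum inside using the bound $A$ on the compact set, then use independence of the $\hat\theta_j$ to factor the expectation of each term into a product of moments of $W^{(1)}_j$, $W^{(2)}_j$, $b^{(1)}_j$, $b^{(2)}_j$; these are uniformly bounded by hypothesis (here one needs uniformly bounded moments up to order $2k$ plus second moments of $W^{(2)}$, all covered by ``uniformly bounded first moments'' in the sense used in the theorem, possibly after invoking H\"older as done in Appendix~\ref{app:Cumulants} to dominate lower moments by higher ones). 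This yields a bound of the form $C_k/(\e N^\delta)$ for some $\delta<2\alpha-2k-3$. Then, exactly as before, writing the non-convergence set $\Omega$ as a countable union over $\e_m\searrow0$ of sets $A^{(m)}=\bigcap_{N}A^{(m)}_N$ with $\Pb(A^{(m)})\le\inf_N\Pb(A^{(m)}_N)=0$, Borel--Cantelli (or just the monotone bound) gives $\Pb(\Omega)=0$. Uniform convergence on compacts plus completeness of $C(I\times I,\R)$ then gives an a.e.\ limit $K_\theta^\infty$, which must equal the pointwise sum $\sum_{j=1}^\infty(\cdots)$, establishing~\eqref{e63}.

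The only genuinely new bookkeeping compared with Theorem~\ref{thm:convergence} is that the ZeNTK has a factor $xy$ (bounded on $I\times I$), carries an extra $j^{2}$ in the numerator of the first sum from the rescaling $jx$ inside $\dot\sigma$ together with the $j^{-2\alpha}\cdot j^{2}$ weighting, and involves products of two activation evaluations rather than one; this is what pushes the threshold from $\alpha>k+1$ up to $\alpha>k+3/2$, and one has to be slightly careful that the second moment $\E[(W^{(2)}_j)^2]$ appears (not just first moments), so strictly one needs bounded moments up to an order determined by $2k$. I expect the main obstacle to be purely organizational: correctly tracking the highest power of $j$ produced after expanding the product of the two polynomial-growth factors and verifying that $\alpha>k+3/2$ is sharp for summability of every resulting term; once that power count is pinned down, the probabilistic part is an almost mechanical repetition of the earlier proof.
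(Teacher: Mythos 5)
Your proposal is correct and follows exactly the route the paper intends: the paper gives no separate proof of this proposition, stating only that ``an argument similar to that of the proof of Theorem~\ref{thm:convergence} shows the following,'' and your power count ($j^{2k}$ from the product of two polynomial-growth factors against the prefactor $j^{-(2\alpha-2)}$, forcing $2\alpha-2-2k>1$, i.e.\ $\alpha>k+3/2$) together with the Chebyshev/tail argument is precisely that adaptation. Your observation that one really needs uniformly bounded moments up to order $2k$ (and second moments of $W^{(2)}$), not merely first moments, is a fair reading of the same looseness already present in the statement of Theorem~\ref{thm:convergence} versus its restatement in the appendix.
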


As we shall now see, unlike the MLP case, the limit in (\ref{e63}), $K_\theta^\infty$, depends crucially on the parameters $\theta$ and therefore allows for direct feature learning. 

\begin{theorem}[ZeNNs learn features, even in the infinetly wide limit]\label{thm:Features Appendix}
Let $x, y \in \mathbb{R}$ and suppose $\dot{\sigma} \neq 0$ in a non-empty interval. Then, there is an open set $U \subset \mathbb{R}^{4\mathbb{N}}$ such that we have
    $$\theta \in U \mapsto K_{\theta}^N (x,y) ,$$
is not constant. We also have that for all $\theta \in U$ such that the limit $\lim_{N \to +\infty}K_{\theta}^N (x,y) = K_{\theta}^\infty(x,y)$ exists, $\theta \mapsto K_{\theta}^\infty(x,y) $ is not constant.

Furthermore, if $\dot{\sigma} \neq 0$ except at a finite set of points, then the set $U$ can be taken to have full measure.
\end{theorem}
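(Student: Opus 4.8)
The plan is to show the map $\theta\mapsto K_\theta^N(x,y)$ is non-constant by exhibiting two parameter configurations giving different kernel values, and then to upgrade ``non-constant on some open set'' to ``non-constant on a full-measure set'' using the real-analytic (or at least smooth) dependence of $K_\theta^N$ on $\theta$. First I would fix $x,y\in\mathbb R$ (handling the degenerate cases $x=0$ or $y=0$ separately, since there the first line of \eqref{eq:NTK} drops out, but the $\sigma(\cdot)\sigma(\cdot)$ term in the second line still carries $\theta$-dependence). Assume then $x,y\neq 0$. By hypothesis $\dot\sigma$ is nonzero on some nonempty open interval $(a,b)$. The idea is to choose, for a single index $j_0$, the weights $W^{(1)}_{j_0}, b^{(1)}_{j_0}$ so that both $j_0 W^{(1)}_{j_0}x+b^{(1)}_{j_0}$ and $j_0 W^{(1)}_{j_0}y+b^{(1)}_{j_0}$ land inside $(a,b)$ — this is possible for a nonempty open set of $(W^{(1)}_{j_0},b^{(1)}_{j_0})$ provided $x\neq y$; if $x=y$ it is even easier. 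Then the coefficient of $(W^{(2)}_{j_0})^2$ in $K_\theta^N(x,y)$, namely
\[
\frac{xy}{j_0^{2\alpha-2}}\dot\sigma(j_0 W^{(1)}_{j_0}x+b^{(1)}_{j_0})\dot\sigma(j_0 W^{(1)}_{j_0}y+b^{(1)}_{j_0})+\frac{(W^{(2)}_{j_0})^0}{j_0^{2\alpha}}\big(\cdots\big),
\]
more precisely the full $\theta$-dependence through $W^{(2)}_{j_0}$, is a genuine nonconstant quadratic polynomial in $W^{(2)}_{j_0}$ whose leading coefficient $\big(\tfrac{xy}{j_0^{2\alpha-2}}+\tfrac1{j_0^{2\alpha}}\big)\dot\sigma(\cdot)\dot\sigma(\cdot)$ is nonzero. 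Varying $W^{(2)}_{j_0}$ over $\mathbb R$ (with all other parameters fixed in the open region just described) changes the value of $K_\theta^N(x,y)$, so the map is non-constant, and by continuity of \eqref{eq:NTK} this non-constancy persists on an open set $U$.

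Next, for the infinite-width claim: by the Proposition preceding the theorem, under the stated growth/$\alpha$ hypotheses $K_\theta^N\to K_\theta^\infty$ uniformly on compacts in $\theta$ along a.e.\ trajectories; more to the point for this argument, I would note that adding the tail $\sum_{j>j_0}$ only adds terms that do not depend on $W^{(2)}_{j_0}$. Hence on $U$ the partial derivative $\partial_{W^{(2)}_{j_0}}K_\theta^N(x,y)$ is the same for all $N>j_0$ and is not identically zero; passing to the limit (where it exists) this derivative survives, so $\theta\mapsto K_\theta^\infty(x,y)$ is likewise non-constant on $U$. One should restrict $U$ if necessary to the set where the limit exists, which is full measure by the Proposition, intersected with the open set above.

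Finally, the full-measure refinement under the stronger hypothesis ``$\dot\sigma\neq 0$ except at finitely many points'': here I would argue that $K_\theta^N(x,y)$, viewed as a function of $\theta$, is real-analytic in the weights $W^{(2)}_j$ (it is literally a polynomial in those), and the ``bad set'' where it fails to depend on, say, $W^{(2)}_{1}$ is contained in the zero set of the analytic (in the remaining variables) function $\theta\mapsto \big(\tfrac{xy}{1^{2\alpha-2}}+1\big)\dot\sigma(W^{(1)}_1x+b^{(1)}_1)\dot\sigma(W^{(1)}_1y+b^{(1)}_1)$; since $\dot\sigma$ vanishes only at finitely many points, the set of $(W^{(1)}_1,b^{(1)}_1)$ making either factor zero is a finite union of lines (when $x,y\neq0$), hence Lebesgue-null in $\mathbb R^2$, hence null in $\mathbb R^{4\mathbb N}$. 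Off this null set the quadratic in $W^{(2)}_1$ has nonzero leading coefficient and $K_\theta^N$ is non-constant; so $U$ can be taken to be the complement of a null set. The main obstacle I anticipate is bookkeeping the edge cases cleanly — $x=0$, $y=0$, $x=y$, and ensuring that when one isolates the dependence on a single neuron's parameters the remaining sum genuinely contributes no compensating $W^{(2)}_{j_0}$-dependence (which is clear from \eqref{eq:NTK} since distinct-$j$ terms involve distinct parameters) — together with being careful that ``$\dot\sigma\neq0$ except at finitely many points'' really does force the relevant zero sets to be null rather than merely proper.
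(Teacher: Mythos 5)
Your proposal is correct and follows essentially the same route as the paper's proof: both isolate the quadratic dependence of $K_\theta^N(x,y)$ on a single second-layer weight $W^{(2)}_{j_0}$ (the paper phrases this as taking two derivatives in $W^{(2)}_{j}$), observe that its leading coefficient $\bigl(xy+1/j_0^2\bigr)j_0^{2-2\alpha}\dot{\sigma}(\cdot)\dot{\sigma}(\cdot)$ is nonzero on an open (respectively full-measure) set of first-layer parameters, and note that the tail $\sum_{j>j_0}$ carries no $W^{(2)}_{j_0}$-dependence, so the non-constancy passes to the $N\to\infty$ limit. Your treatment is in fact slightly more careful than the paper's on the edge cases (placing both preactivations in the interval where $\dot{\sigma}\neq 0$, and the null-set argument for the full-measure refinement); the only point both arguments leave implicit is that $xy+1/j_0^2$ could vanish for at most one index $j_0$, which is cured by choosing a different $j_0$.
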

\begin{proof}
We can write the ZeNTK more concisely as
\begin{align*}
K_\theta^N (x,y) & = \sum_{j=1}^N \frac{1}{j^{2\alpha}} + K_{(1)}^N (x,y) + K_{(2)}^N (x,y),
\end{align*}
where
\begin{align*}
K_{(1)}^N (x,y) & = \sum_{j=1}^N \frac{1}{j^{2\alpha}} \sigma (j W_1^{(j)}x+b^{(1)}_j) \ \sigma (j W_1^{(j)}y+b^{(1)}_j) , \\
K_{(2)}^N (x,y) & =  \sum_{j=1}^N \frac{1}{j^{2\alpha-2}} \left( xy + \frac{1}{j^2} \right) \left( W^{(2)}_j \right)\dot{\sigma} (j W_1^{(j)}x+b^{(1)}_j)  \left( W^{(2)}_j \right) \ \dot{\sigma} (j W_1^{(j)}y+b^{(1)}_j).
\end{align*}
We can now check that $\theta \mapsto K_\theta^N(x,y)$ is in general not constant, not even in the limit $N \to + \infty$. This means that the kernel $K_\theta$ depends on the parameters $\theta$. To do this we simply take two derivatives with respect to $W^{(2)}_j$ for some fixed $j \in \mathbb{N}$, yielding
\begin{align*}
\frac{\partial^2 K_{(1)}^N (x,y)}{\left( \partial W^{(2)}_j \right)^2} & = 0 , \\
\frac{\partial^2 K_{(2)}^N (x,y)}{\left( \partial W^{(2)}_j \right)^2} & =  \frac{1}{j^{2\alpha-2}} \left( xy + \frac{1}{j^2} \right) \dot{\sigma} (j W_1^{(j)}x+b^{(1)}_j) \ \dot{\sigma} (j W_1^{(j)}y+b^{(1)}_j).
\end{align*}
Hence,
$$\frac{\partial^2 K_{\theta}^N (x,y)}{\left( \partial W^{(2)}_j \right)^2}  =  \frac{1}{j^{2\alpha-2}} \left( xy + \frac{1}{j^2} \right) \dot{\sigma} (j W_1^{(j)}x+b^{(1)}_j) \ \dot{\sigma} (j W_1^{(j)}y+b^{(1)}_j),$$
which is independent of $N$ and does not vanish for almost every $W^{(1)}_j$, $b^{(1)}_j$. This proves our claim that $K_{\theta}^N (x,y)$ is not constant for any finite $N$ nor can it become constant in the limit $N \to +\infty$.
\end{proof}

\section{Model specifications used in Section~\ref{subsec:image}}
\label{app:specs}

Here we present the detailed specifications of the models discussed in Section~\ref{subsec:image}. 
Recall that all such models have a similar global structure with 2d inputs, 3d outputs with linear activation, and 4 hidden layers. The last 3 hidden layers are MLPs with ReLU activation and $256$ units. The distinction between the models occurs only at level of the first hidden layer. Unless indicated, all weights were initialized with a unit normal distribution and all bias initialized at zero. 

We now provide the specifications for the first hidden layers:  
\begin{itemize}
\item {\em MLP} uses a ReLU MLP layer with $2048$ units; 
\item {\em oZeNN} concatenates 2 oZeNN layers~\eqref{oZeNN}, one with sine activation and the other with cosine activation,  both with $N=256$ and $\alpha=0$;  
\item {\em radZeNN} concatenates 2 radZeNN layers~\eqref{radZeNN}, one with sine activation and the other with cosine activation,   both with $N=256$ and $\alpha=0$; 
\item  {\em randoZeNN}  concatenates 2 randoZeNN layers~\eqref{randoZeNN}, one with sine activation and the other with cosine activation, both with $M=16384$, $N=256$ and $\alpha=0$; 
\item  {\em FF}  uses a random Fourier features embedding~\eqref{FF}, with $N=4096$ and  $\rho=10$; 
\item  the {\em FF-trainable} uses a trainable random Fourier features embedding, with $N=4096$ units and $\rho=10$. 
\end{itemize}

\end{document}